\documentclass[lettersize,journal]{IEEEtran}
\usepackage{amsmath,amsfonts}
\usepackage{algorithmic}
\usepackage{array}
\usepackage[caption=false,font=normalsize,labelfont=sf,textfont=sf]{subfig}
\usepackage{textcomp}
\usepackage{stfloats}
\usepackage{url}
\usepackage{verbatim}
\usepackage{graphicx}

\usepackage[ruled,linesnumbered,noend]{algorithm2e}
\usepackage{algorithmic}

\DontPrintSemicolon
\SetKwInOut{Input}{Input}
\SetKwInOut{Output}{Output}
\SetKw{KwAnd}{and}
\SetAlFnt{\small}

\usepackage{comment}
\usepackage{paralist}
\usepackage{color}
\usepackage{paralist}
\usepackage{xcolor}
\usepackage{amsmath}
\usepackage{amssymb}

\usepackage{amsthm}
\usepackage{float}

\usepackage{bm}
\usepackage{xspace}

\usepackage{enumitem}
\usepackage{pifont}
\usepackage[labelformat=simple, font=small, skip=3pt]{subcaption}
\usepackage[font=small, skip=3pt]{caption}
\usepackage[utf8]{inputenc}
\usepackage{csquotes}
\usepackage{graphicx}
\usepackage{soul,color}
\usepackage{lipsum}
\usepackage[noadjust]{cite}

\graphicspath{{figures/}}

\makeatletter
\newcommand*{\rom}[1]{\expandafter\@slowromancap\romannumeral #1@}
\makeatother

\floatstyle{boxed}
\newfloat{MILP}{tbp}{lop}

% theorem-like environments
\newtheorem{theorem}{Theorem}

\theoremstyle{definition}
\newtheorem{definition}{Definition}

\setlist[description]{font=\normalfont\itshape\textbullet\space}

\begin{document}
\title{ Solving Stochastic Orienteering Problems with Chance Constraints Using Monte Carlo Tree Search } 
\author{
		Stefano Carpin~\IEEEmembership{Senior~Member,~IEEE}%
	\thanks{S. Carpin is with the Department of Computer Science and Engineering,   University of California, Merced, CA, USA. 
		This work is partially supported by the USDA-NIFA under award  \# 2021-67022-33452 and by the National Science Foundation (NSF) under Cooperative Agreement EEC-1941529. Any opinions, findings, conclusions, or recommendations expressed in this publication are those of the author and do not necessarily reflect the views of the U.S. Department of Agriculture or the NSF. % <-this % stops an unwanted space
	}
}

\maketitle

\begin{abstract}
We present a new Monte Carlo Tree Search (MCTS) algorithm to solve the stochastic
orienteering problem with chance constraints, i.e., a version of the problem
where travel costs are random, and one is assigned a bound on the tolerable probability of exceeding the
budget. The algorithm we present is online and anytime, i.e., it alternates planning and execution,
and the quality of the solution it produces increases as the allowed computational time
increases. Differently from most former MCTS algorithms, 
for each action available in a state the algorithm maintains estimates of both
its value and the probability that its execution will eventually result in a violation of the chance 
constraint. Then, at action selection time, our 
proposed solution prunes away trajectories that are estimated to violate the 
failure probability. Extensive simulation results show that this approach can quickly 
produce high-quality solutions and is competitive with the optimal but time-consuming solution.
\end{abstract}

\begin{ntp}
In many practical scenarios one is faced with multiobjective sequential decision making 
problems that can be solved through constrained optimization. If some of the parameters
are known with uncertainty, the event ``violating one of the constraints'' becomes a random
variable whose probability should be bound. As an application of this general problem
formulation, in this paper we consider stochastic orienteering, a problem that finds
applications when a robot is tasked with performing multiple tasks of varying utility while being
subject to a bound on the traveled distance. Many problems in logistics, precision agriculture, 
and environmental monitoring, just to name a few, can be cast as instances of this optimization
problem. 
\end{ntp}

\begin{IEEEkeywords}
	Stochastic Orienteering; Chance Constraints; Monte Carlo Tree Search
\end{IEEEkeywords}
\IEEEpeerreviewmaketitle	

\section{Introduction} 
Orienteering is a combinatorial optimization problem that can be used to model numerous
problems relevant to robotics and automation, 
such as logistics \cite{CarpinTRO2022}, environmental monitoring \cite{Rus2016}, 
surveillance \cite{Jorgensen2017}, and precision agriculture \cite{CarpinThayerTASE2020}, just to name
a few (see Figure \ref{fig:orienteering}).
\begin{figure}[htb]
	\centering

	\includegraphics[width=\columnwidth]{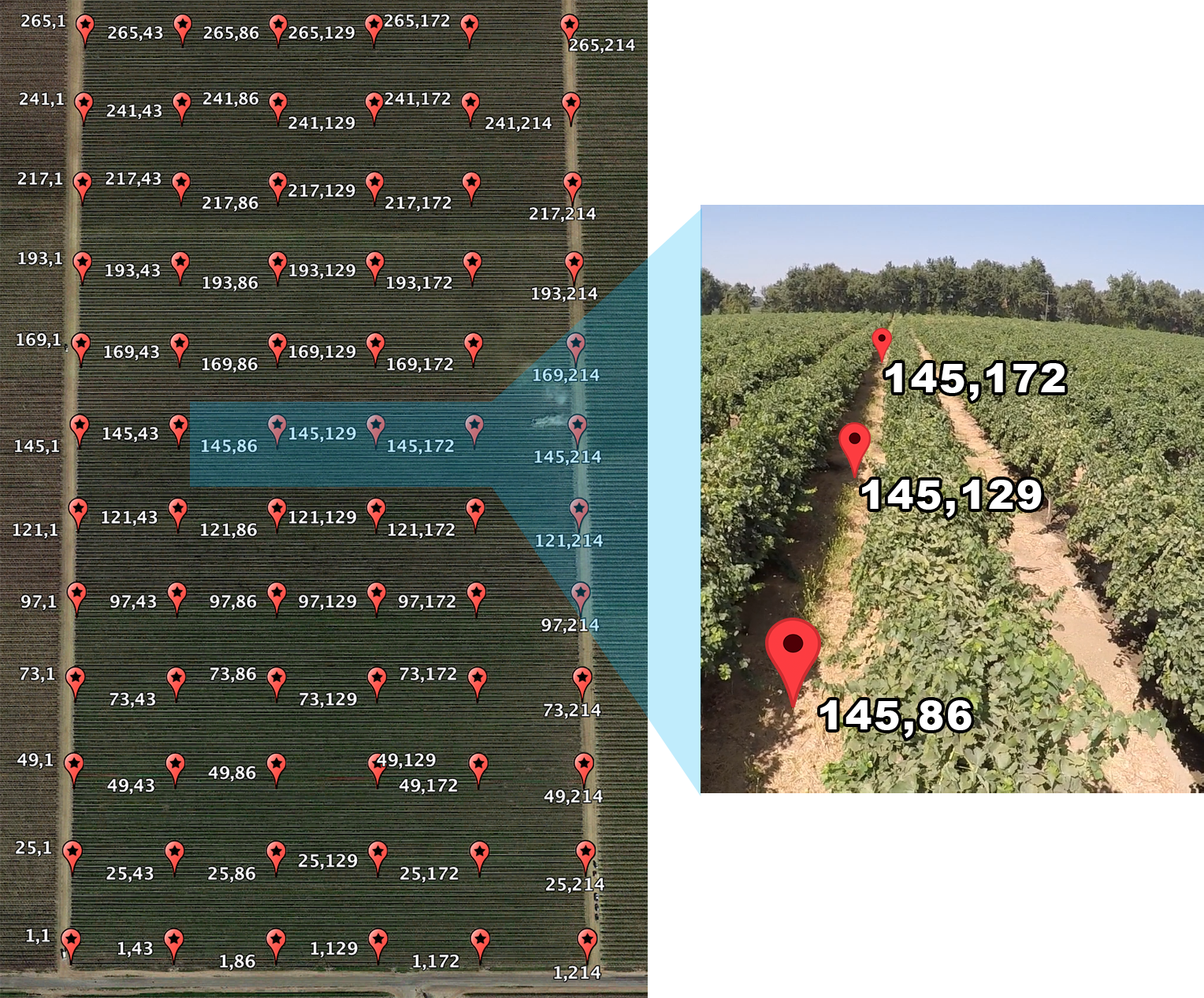}
	
	\caption{
		An example of the application of orienteering in viticulture.  The left panel shows an
		aerial view of a commercial vineyard located in California with more than 55,000 vines.
		Pins are displayed at locations where soil moisture samples must be collected.
		The left panel shows a zoomed  version of the vineyard, illustrating how the pins are 
		placed in the traversable regions between tree rows. 
		Due to the extension of the vineyard, a robot
		tasked with automatically collecting samples cannot visit all locations, and has to select a suitable
		subset. Not all sampling locations have the same value, and therefore this is a natural instance of the
		orienteering problem. Note that in this specific problem setting the robot cannot move
		along straight lines when moving between different locations, but rather has to exit either end
		of the vineyard before moving to a different tree row because of the irrigation infrastructure.}
	\label{fig:orienteering}
\end{figure}
 In its basic formulation, one is given a graph $G$ with rewards associated with vertices and costs
assigned to edges, as well as a budget $B$. The goal is to find a path  collecting the highest
sum of rewards of the visited vertices while ensuring that the path cost does not exceed the budget $B$.
In this paper, for the sake of simplicity, we assume that $B$ is a temporal budget, but it could as well be energy or any other
resource consumed by the robot as it moves from location to location.
Numerous variants have been proposed and, as discussed in section \ref{sec:sota}, most are 
computationally intractable. In robotics applications this model is usually adopted in
scenarios where vertices are associated with places that a robot must visit (e.g., to pick a
package, or to deploy a sensor, collect a sample, etc.), and edges are associated with paths connecting the different 
locations, with the edge cost modeling the time or energy spent by the robot to traverse it. 
Most former research in this area has considered scenarios where costs and rewards
are deterministic, and only a few works have explicitly considered cases where these are affected 
by uncertainty.  % HERE
In most practical applications, however, the time or energy spent to traverse an edge 
(i.e., to move from one location to  another) is not known upfront, but is rather a continuous 
random variable whose realization will  only be known at run time. 
When this is the case, it follows that for a given traversed path the incurred cost is also a random variable.
For example, the robot
may have to wait to traverse an aisle in a warehouse to give way to another robot coming in the
opposite direction, or it may have to take a detour because a passage is blocked, and so on.
The variant of the problem where edge costs are stochastic is known as the \emph{stochastic
orienteering problem}, and while studied in the past \cite{Campbell2011}, it has received far less attention than
its deterministic version. Obviously, the stochastic version is not simpler than the deterministic one.
Assuming that the probability density function (pdf) characterizing the traversal time of edges 
is known, some former solutions reduced the stochastic problem to the deterministic one by using the
expected traversal time. Such solutions are in many situations unsatisfactory, because optimizing
for expectation may lead to  realizations where the robot exceeds the budget $B$ while 
executing the task. This may be a major inconvenience because if the robot runs out of energy
and stops, it has to be recovered and recharged.

In our recent works \cite{CarpinRAL2021b,CarpinIROS2021b,CarpinIROS2021a} we  presented a new approach to solve the stochastic
orienteering problem whereby we introduce \emph{chance constraints}, i.e., while solving the
problem we consider a constraint on the probability that the stochastic cost of the path exceeds the 
assigned budget. It shall be mentioned that besides our works, there is very little former literature
attacking this specific problem, as discussed in Section \ref{sec:sota}.
Differently from all previous works, our recently developed algorithms produce a \emph{path policy}, i.e., a time-parametrized schedule that, 
depending on how much budget is left, determines where to move next while ensuring that the probability
of exceeding the budget remains bounded below an assigned constant. 
This approach is therefore adaptive, i.e., rather than producing one path, it gives a policy that will result in different paths depending
on the actual realizations of the edge traversal times.
This is achieved by reducing the 
planning problem to a constrained Markov Decision Process (CMDP) with a suitable structure. This approach,
while effective, has some limitations. First, the path policy is determined using an  initial solution to 
the deterministic version of the orienteering problem. This initial solution is computed using a heuristic, and if 
the heuristic selects a poor path, the algorithm has no way to move away from it. Second, in constructing
the finite CMDP, the continuous temporal dimension capturing the time left is discretized into time intervals.
This generates a tradeoff, where a finer grain discretization generates a  larger state space for
the associated CMDP, and this translates to increased computation time. Moreover, the discretization
naturally leads to an approximation of the underlying transition probabilities in the state space.

In an effort to overcome these limitations of  former solutions, in  this paper we present a completely different 
approach to solve the stochastic orienteering problem with chance constraints. The main idea is to repeatedly 
search the space of possible paths using a Monte Carlo Tree Search (MCTS) approach in an
online fashion.  Following a typical rolling horizon approach, after an initial path has been identified, only the 
first segment is executed. Then, based on the \emph{actual} time spent to execute the first motion, 
the remaining budget is updated, and the method run again. Additionally, to account for the chance constraint,
we introduce a novel \emph{backup} procedure based on Monte-Carlo sampling that 
eliminates from the search tree the paths that would violate the budget constraint.
Accordingly, the search in the tree is informed by a novel criterion we dub UCTF (Upper-bound Confidence
for Trees with Failures) that extends the widely used UCT formulation.
The original contributions of this paper are the following:
\begin{itemize}
\item we formulate the stochastic orienteering problem with chance constraints  as an MCTS planning problem;
\item we introduce novel tree policies and backup policies to incorporate and manage the probability of violating the given constraint;
\item we demonstrate that this approach is competitive with optimal, time-consuming solutions based on mixed integer linear programming.
\end{itemize}

The rest of the paper is organized as follows. Section \ref{sec:sota} discusses selected
related work, while detailed background on stochastic orienteering and MCTS algorithms
is provided in section \ref{sec:background}. Our new algorithm is introduced in 
section \ref{sec:algo} and theoretically analyzed in section \ref{sec:thprop}.
Experimental evaluations are then given in section \ref{sec:results} and we then draw conclusions
and discuss  future work in section \ref{sec:conclusions}.
\section{Related Work} 
\label{sec:sota}

The deterministic  orienteering problem was first formalized in \cite{Golden1987} where it was	also shown to 
be  $\mathit{NP}$-hard. Consequently, most solutions to the problem rely on heuristic approaches
\cite{CarpinThayerICRA2018,Gunawan2016}. Exact solutions for limited size instances
can be found  using integer programming formulations \cite{fischetti1998solving},
while approximate solutions have been proposed but have seen limited use \cite{Chekuri2012}.
Stochastic variants of the problem can encompass stochastic costs for the edges or stochastic
rewards for the vertices. As  the deterministic  orienteering problem is a
special case of stochastic orienteering, it follows that stochastic orienteering is $\mathit{NP}$-hard, too.
In \cite{Campbell2011} the authors propose  an exact solution for a special class of graphs, and various
heuristics for general graphs, but do not consider chance constraints, i.e., bounds on the 
probability of exceeding the budget. The works presented in \cite{Varakantham2018} 
and \cite{10.1007/978-3-642-41575-3_30} tackle a problem
similar to ours, inasmuch as they consider a risk-sensitive formulation for the stochastic orienteering problem.
In \cite{Varakantham2018}  the authors propose an algorithm to solve it based on local search,
while in \cite{10.1007/978-3-642-41575-3_30} the authors propose a mixed integer program based on sample average approximation.
These solutions are fundamentally different from the one we propose because they are
formulated offline a priori, and not updated as the mission unfolds based on the travel costs experienced during the mission.
This may lead to excessively conservative solutions that collect less reward on average (think for
example to the case when one follows a predetermined path where traversal times are much lower than 
expected.)
Our previous works \cite{CarpinRAL2021b,CarpinIROS2021b,CarpinIROS2021a} are the first ones to 
propose the concept of a \emph{path policy} while solving the stochastic orienteering problem with chance constraints.
While the computation is still offline, rather than computing a single path, these methods produce a set of rules
that can be queried at runtime to determine which vertex to visit next based on the remaining budget.
Therefore these solutions will return different paths depending on the specific realization of the stochastic processes
governing travel times along the edges.

Monte Carlo Tree Search (MCTS) encompasses a family of any-time methods to solve 
planning problems using generative models. Albeit a mature technique \cite{UCB,Coulom2007,MCTSSurvey},
MCTS gained significant popularity while being recently used in combination with reinforcement
learning, most notably in \cite{Mnih2015}. The use of MCTS algorithms for problems with chance
constraints has been so far limited. In \cite{ayton2018vulcan} the authors  propose an algorithm
for chance constrained Markov Decision Processes that is guaranteed to return a policy 
satisfying the chance constraint.  However, the  algorithm described in \cite{ayton2018vulcan}
assumes a discrete state space, because all explored state histories must be explicitly stored in a search tree 
to avoid repeated evaluations of already considered histories. This is evident in the case studies presented in \cite{ayton2018vulcan} to illustrate
its performance. The algorithm and application we present in this paper, instead, can handle continuous components in
the state space. In particular, in the stochastic orienteering problem 
 the state includes the residual budget, which is a continuous variable.

\section{Background}  
\label{sec:background}

In this section, we first summarize the general definition of chance constrained optimization,
and then show how to cast the stochastic orienteering problem with chance constraints (SOPCC in the following)
 as an instance of this class of problems. 
We also present some ideas and results related to the Sample Average Approximation approach (SAA)
that are used in our solution, as well as in the alternative method we compare our solution with.
In the first part we mostly follow the notation and formulation
introduced in \cite{SAApaper}.

\subsection{Chance Constrained Problems}
A chance constrained problem is an optimization problem formulated as
\begin{align}
&\min_{x \in X} f(x)  \label{eq:ccproblem} \\
&\textrm{s.t.}~ \Pr[L(x,\xi)  \leq 0] \geq 1-\alpha \nonumber 
\end{align}
where: $X$ is the space of possible solutions; $f: X\rightarrow \mathbb{R}$ is a real valued objective  function; 
$\xi$ is a random vector with known probability density function and support $\Xi \subset \mathbb{R}^m$; $\alpha \in (0,1)$ is an assigned confidence level; $L: X\times \Xi \rightarrow \mathbb{R}$. In general  $L$ could be a vector function, 
but we here consider the case where it is scalar as it is more directly connected to the problem we present later.
The constraint involving $L$ is the so-called \emph{chance constraint} because it imposes  an upper bound on the probability
of violating the inequality $L(x,\xi)  \leq 0$.
Chance constrained problems  are extensively used in portfolio optimization \cite{Rockafellar2000}, but have also been used to model
and solve robotics problems \cite{Ono2015,PavoneECC2020,ayton2018vulcan}.
It is well known that these problems are in general difficult to solve \cite{SAApaper,Shapiro2005}  and therefore
approximate solutions have been proposed. One such method is the SAA approach where one uses $N$ independently
identically distributed samples of $\xi$ to approximate the chance constraint. More precisely, problem~\eqref{eq:ccproblem}
can be equivalently rewritten as 
\[
\min_{x \in X}f(x) \qquad \textrm{s.t.} ~ ~ p(x) \leq \alpha
\]
where we define the function $p(x)$ as $p(x) = \Pr[L(x,\xi)>0],$
and we now evidence that $\alpha$ bounds the probability of violating the constraint 
	$L(x,\xi)  \leq 0$ introduced in \eqref{eq:ccproblem}.
Then, after having drawn $N$  independent samples  $\xi_1\dots \xi_N$ for the random vector $\xi$, one can approximate $p(x)$ with  the function 
\begin{equation}
\hat{p}_N(x) = \frac{1}{N}\sum_{i=1}^N \mathbb{I}(L(x,\xi_i))
\label{eq:saa}
\end{equation}
where $\mathbb{I}(\cdot)$ is the indicator function that is equal to 1 if its argument is positive and 0 otherwise.
As shown in \cite{SAApaper}, 
\[
\lim_{N\rightarrow \infty} \hat{p}_N(x)  = p(x).
\]

\subsection{Stochastic Orienteering with Chance Constraints}
The deterministic orienteering problem is defined as follows. Let $G=(V,E)$ be a directed graph with $n$ vertices, 
$r:V\rightarrow \mathbb{R}^+$ be a positive reward function defined over the set of vertices and $c:E\rightarrow \mathbb{R}^+$
be a positive cost function defined over the edges. Let $v_s\in V$ and $v_g \in V$ be assigned start and goal vertices, respectively.
In the following, without loss of generality, we assume $G$ is a complete graph. When this is not the case one can
simply add all missing edges and set their costs equal to the sum of the costs along the shortest path.
 For a path $\mathcal{P}$ over $G$ connecting $v_s$ to $v_g$, the reward of the path
$R(\mathcal{P})$ is defined as the sum of  rewards of the vertices along the path, with the stipulation that if  a vertex
is visited more than once, then its reward is collected just once. The cost of the path $C(\mathcal{P})$ is
instead the sum of the costs of the edges along $\mathcal{P}$, but in this case, if an edge appears multiple
times, its corresponding cost is charged every time. However, since we assumed that $V$ is a
complete graph, it is therefore evident that an optimal solution will never have to visit the same vertex twice
or traverse the same edge twice. For a given budget $B>0$, the orienteering problem
asks to solve the following constrained optimization problem 
\[
\mathcal{P}^* = \arg  \max_{\mathcal{P}\in \Pi} R(\mathcal{P}) \qquad \textrm{s.t. } C(\mathcal{P}) \leq B
\]
where $\Pi$ is the set of all possible paths  connecting $v_s$ with $v_g$. 
In this problem the space of (feasible or infeasible) solutions $\Pi$ is a discrete set consisting of
$(n-2)!$ paths from $v_s$ to $v_g$.\\
In the stochastic version of the problem, 
the cost of each edge $(v_i,v_j)$ is not a constant, but rather a continuous random variable $\xi_{i,j}$ with
a known probability  density function (pdf) with positive support and  finite expectation. 
The availability of these density functions is essential in our method, 
because we will use an approach similar to SAA  to estimate the probability that
a candidate solution violates the budget constraint.
 This assumption is
also consistent with MCTS literature where a generative model is assumed to be fully known to implement
the rollout step described later on.
 Note that for the viability of the method we propose,
it is not necessary for these edge pdfs to be independent. All that is needed is being able to draw  independent identically distributed samples
from these pdfs, as per the SAA method described by Eq.~\eqref{eq:saa}. This assumption also matches the process
 described above to turn an incomplete graph into a complete graph by adding missing edges.
  Edges added to the graph have a random cost equal to
 the sum of the random costs of the edges of shortest the path associated with the edge being added. The pdfs of these
 random costs will be in general not
 independent, but as just pointed out this dependency does not invalidate the applicability of the method we propose.\footnote{ For completeness, 
 one should add that the {\em shortest path} in this case is determined considering the expectations of the edges' random costs, which
we assumed to be finite.}
In the following, for the stochastic
version of the problem $c(v_i,v_j)$ is the expectation of the random variable $\xi_{i,j}$ associated with the edge 
$(v_i,v_j)$.
 In this case, for a given path $\mathcal{P} \in \Pi$ the corresponding path cost
$C(\mathcal{P})$ is therefore  a random variable given by the sum of the random variables
associated with the edges appearing along the path that will be written as $C(\mathcal{P},\xi)$,
following the notation introduced in the previous subsection.
Given a fixed failure probability $P_f$, the
SOPCC  asks to solve the following constrained optimization problem:
\begin{align}
&  \max_{\mathcal{P}\in \Pi} R(\mathcal{P}) \label{eq:sopcc} \\
& \textrm{s.t. } \Pr[C(\mathcal{P},\xi) > B]\leq P_f \nonumber
\end{align}
i.e., we now constrain the probability that the cost of the path exceeds the budget $B$.
Problem \eqref{eq:sopcc} is evidently an instance of the equivalent formulation of problem \eqref{eq:ccproblem}
discussed above.\footnote{While \eqref{eq:ccproblem} is formulated as a minimization problem, \eqref{eq:sopcc}
is formulated as a maximization problem for consistency with the orienteering literature, but the two solutions
are obviously equivalent. } \\
 \emph{Remark}: in the  definition of the stochastic orienteering problem with chance constraints $P_f$ 
	is a parameter that must be set according
	to the risk aversion of the end user, i.e., it represents the accepted probability of violating the  assigned
	budget constraint.  In the orienteering scenarios considered in the introduction, 
	this is the probability that the robot runs out of energy before being able to reach the goal vertex
	$v_g$ where batteries can be recharged or swapped. Risk averse users will prefer low values for
	$P_f$, while risk tolerant users may be willing to consider higher values, in return for larger rewards. The   $P_f$ value will in turn
	influence $\Pi$, i.e., the set of possible solution paths, with smaller values corresponding to smaller sets.  
	In section \ref{sec:results} we will solve the same problem instances for different values of $P_f$ and this will
	illustrate how rewards and computational time vary. In general,  it shall be noted that   the  assumption that the pdfs
	for the edge costs have finite 	expectations does not allow to set $P_f=0$. This is for example the case when 
	edge costs are distributed  according to exponential distributions, as we do in section \ref{sec:results}. 
	In this instance,  one cannot assume an upper bound on the edge  travel cost, and therefore setting $P_f=0$ 
	will yield an empty solution set $\Pi$ making the problem unfeasible. These aspects will be further expanded
	in section \ref{sec:results}.

In practical instances, one is of course not only interested in the value of the 
objective function $\mathcal{R}(\mathcal{P})$ but also in the optimal path $\mathcal{P}$.
When  solving this problem, one can compute the solution offline, i.e., before its execution starts,
or online, i.e., it may update the path based on the available residual budget. 
In \cite{CarpinRAL2021b,CarpinIROS2021b,CarpinIROS2021a} we proposed an online algorithm where we computed
 a \emph{path-policy} that  guides the robot through the vertices while being aware of the 
remaining budget. 
In these works the policy is computed off-line but it includes a family of paths, and the selection 
of the path to follow is performed online based on the realizations of the random variables associated with the 
already traversed edges.
The method presented in  \cite{10.1007/978-3-642-41575-3_30} is 
instead a fully offline solution where the path $\mathcal{P}$ is computed beforehand and then executed
irrespective of the realizations of the stochastic travel costs.
Alternatively, one can opt for a fully online approach, where the path is continuously refined
based on the time spent by the agent while traversing the edges. This can be implemented with the
MCTS approach discussed next.  

\subsection{Monte Carlo Tree Search}
MCTS is an approach to solving decision making problems in an online fashion, where planning and execution
alternate.  It belongs to the family of receding-horizon (also called rollout) methods \cite{BertsekasRollout}, whereby one solves the 
planning problem using a finite time-horizon, but then executes just the first action in the plan, and then 
re-plans from scratch based on the outcome of the first action. In MCTS planning, the algorithm builds
a rooted tree whose root node represents the current state, and whose edges connect states that can be reached
through the execution of a single action.
Key  elements in MCTS algorithms are the following: 
\begin{enumerate}
	\item a selection process to  move from the root of
	the tree down to a leaf following a so-called \emph{tree policy};
	\item an \emph{expansion} step executed to add leaf nodes to the tree; 
	\item  a \emph{rollout policy} to be executed from a leaf to estimate how ``good'' a leaf is;
	\item  a \emph{backup policy}
	to be executed from the leaves back to the root to guide the eventual selection of the best action from the root.
\end{enumerate}
After the tree is built, an action is selected among those available in the root node. The action 
is executed, and the tree is discarded and rebuilt having as root the vertex reached executing the selected action.
The reader is referred to \cite{SuttonRL} (chapter 8 and references therein) for more details.
As pointed out in \cite{pmlr-v48-khandelwal16}, different tree policies (step 1) and backup policies (step 4) 
may have a dramatic impact on the performance of MCTS based planning. In particular, while
UCT  (Universal Confidence bound for Trees) \cite{UCB} is often considered the standard tree policy, it is not directly 
applicable to our problem because actions yielding high values (adding a high reward for the path in the orienteering
problem) may also increase the probability of violating the chance constraint $ \Pr[C(\mathcal{P},\xi) > B]\leq P_f$
if they  have high edge costs.
For this reason, we propose an alternative tree policy based on UCT, but factoring in also failure probabilities
(we call this tree policy UCTF -- UCT with Failures).
Similarly, backup strategies based
on plain Monte Carlo averaging are not applicable because they do not consider whether constraints are
violated or not.	
Inspired by the complex backup strategies studied in \cite{pmlr-v48-khandelwal16}, in this work we instead propose
a backup policy that explicitly considers failure constraints.

\section{An Online MCTS Algorithm for Stochastic Orienteering with Chance Constraints} 
\label{sec:algo}

\subsection{Using trees to represent orienteering solutions}

The MCTS approach sketched in the previous section can be used to model and solve an instance of the SOPCC problem
as follows.
To each vertex $v \in V$ we associate a set of actions, i.e., the set of vertices that can be directly reached from $v$. 
Hence, each action corresponds to an edge in $G$.
The tree $\mathcal{T}$ 
is rooted at the vertex where the robot is currently positioned, and is parametrized by the available budget $B$.
Therefore, all quantities stored in $\mathcal{T}$ are relative to the available budget $B$. 
As the robot moves from location to location, the available budget $B$ is decreased to account for the already incurred travel costs.
There is a one-to-one correspondence
between nodes in the tree $\mathcal{T}$ and vertices in the graph. Each node in the tree may have from 0 to $n-1$ children. 
Vertex $v_j$ can be a child of $v_i$ in the tree only if there is an edge connecting $v_i$ with $v_j$, i.e., if there is
an action from $v_i$ that leads to $v_j$.  
Throughout the remainder of this section, when considering a tree $\mathcal{T}$ we ignore nodes associated with vertices
in $G$ that have already been visited by the robot, because they cannot be part of an optimal solution  as there is never value in 
revisiting an already visited vertex.
Figure \ref{fig:graphandtree} shows a tree associated with a simple graph
with five vertices.

\begin{figure}[h]
	\centering
	\includegraphics[width=0.8\linewidth]{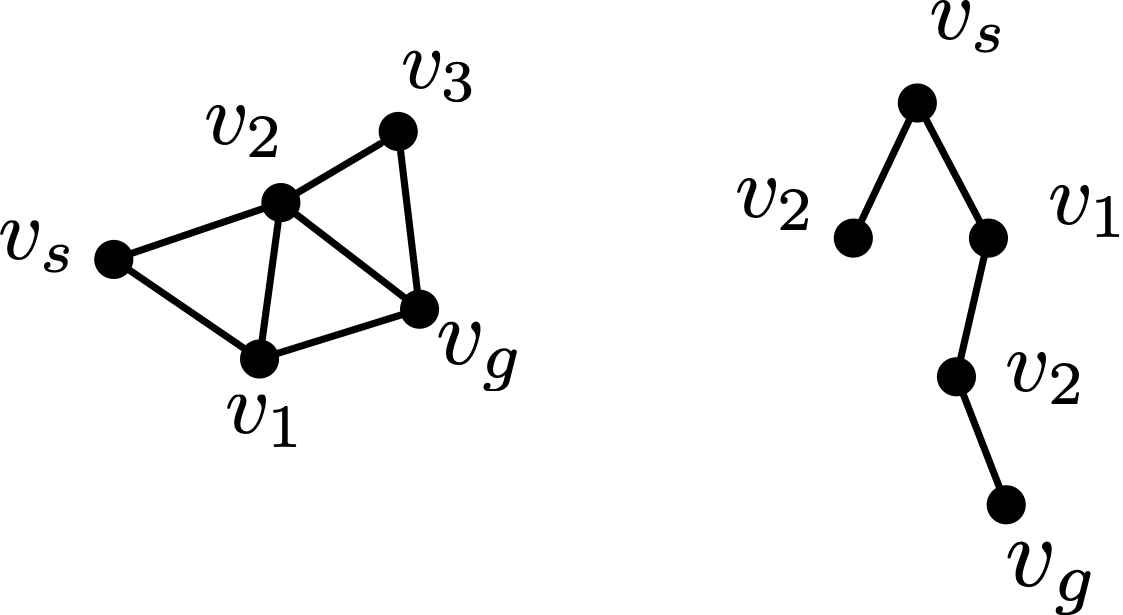}
	\caption{The right side of the figure shows a possible MCTS tree  $\mathcal{T}$ associated with the simple graph on the left
		and rooted in $v_s$ (start vertex).
		Vertices $v_1$ and $v_2$ are children of $v_s$ because they are directly connected to it.  Executing action $v_1$ from
		$v_s$ means moving from $v_s$ to $v_1$. Vertex $v_3$, not appearing in  the tree, cannot be a child of
		$v_s$ because it is not directly connected to it. Vertex $v_2$ appears as a child of both $v_s$ and $v_1$ because
		it is connected to both, but it occurs along two different paths starting from the root node $v_s$.  
		All paths in $\mathcal{T}$ from $v_s$ to a leaf encode  possible paths in $G$. In this simple
		example there are two paths, namely $v_s,v_2$, and $v_s,v_1,v_2,v_g$.  Note that while the MCTS is being built 
		not all paths must end at the goal vertex $v_g$.}
	\label{fig:graphandtree}
\end{figure}

For a given path from the root to a leaf, one can simulate the time it takes to traverse it
by adding random samples drawn from the known pdfs associated with the edges along the path.
For every internal tree node $v_i$ we store three attributes for each of its children $v_j$:
\begin{itemize}
	\item $N[v_j]$ is the  number of times that  action $v_j$ was attempted from $v_i$;
	\item $Q[v_j]$ is the expected reward associated with the \emph{feasible} path of maximum 
	reward   that selects $v_j$ from $v_i$, if it exists.
	Feasible, in this context, means that the estimated failure probability does not exceed the assigned bound $P_f$
	(see also Definition \ref{def:feasible} below.)
	If all paths connecting $v_i$ to $v_j$ violate the failure probability $P_f$, then $Q[v_j]$ is set to the expected
	reward associated with the paths starting from $v_i$ and going through $v_j$.
	\item $F[v_j]$ is the estimated failure probability  of the path defining the value $Q[v_j]$ just defined.
	Therefore, when the path is feasible $F[v_j]  \leq P_f$.
\end{itemize}
The first two attributed $N$ and $Q$ are borrowed from the classic MCTS definition, while $F$ is specifically 
introduced to solve the problem we study in this paper.
These three quantities are incrementally updated as the tree is being built and expanded, and the specifics will be given
when discussing the backup policy. With reference to the tree depicted in Figure \ref{fig:graphandtree}, the
root note $v_s$, stores the quantities $Q[v_1], Q[v_2],F[v_1],F[v_2],N[v_1]$ and $N[v_2]$
because $v_1$ and $v_2$ are connected  to $v_s$. Similarly $v_1$ stores its own 
 $Q[v_2],F[v_2]$ and $N[v_2]$ because $v_2$ is a child of $v_1$, but these values are different
 than those stored in $v_s$ because they are associated with a different path from $v_s$ to $v_2$.

\begin{definition}\label{def:feasible}
	Let $v_i$ and  $v_j$ be two nodes  in the tree $\mathcal{T}$ and let $v_j$ be a child node of $v_i$.
	We say that $v_j$ is \emph{feasible} for $v_i$  if $F[v_j] \leq P_f$.
\end{definition}
In the above definition,  $F[v_j]$ is the value stored in $v_i$, i.e., $v_j$ is feasible for $v_i$ if the estimated
probability of failure for a solution path $\mathcal{P}$ going through $v_i$ and then $v_j$ does not exceed the assigned
value $P_f$.

\subsection{Online algorithm alternating planning and execution}

The online algorithm we propose alternates planning and execution (see Algorithm \ref{algo:general}). 
Throughout its planning phase, the search tree expansion is conditioned on the residual budget $B$
which is updated after each action is selected and executed.
 At the first iteration, the algorithm solves the SOPCC  with the assigned budget $B$ and root node $v$ set to $v_s$ (line \ref{algo:start}.)
The solution of SOPCC defines the first action to take, i.e., the robot moves from $v_s$ to the vertex $next_v$
returned by algorithm MCTS-SOPCC (line \ref{sopcccall}) and incurs  a random travel cost $\xi_v$ (line \ref{algo:movecost}.)
 The budget is then updated by setting $B$ to
$B-\xi_v$ (line \ref{algo:budgetupdate})
and the SOPCC is solved again with the updated budget $B$ and  starting  at vertex $v$ (now set to $next_v$ in line \ref{algo:vertexupdate}.) 
The process repeats
until either the final vertex $v_g$ is reached, or the budget is completely spent. In this last case, the
run is considered a failure (line \ref{algo:failure}).  

\begin{algorithm}
\KwData{Graph $G=(V,E)$, vertices $v_s,v_g\in V$, budget $B$}
%\KwResult{Success or Failure}
$v \leftarrow v_s$\; \label{algo:start}
\While{$B>0$ \KwAnd $v\neq v_g$ }{
$next_v \leftarrow \textrm{MCTS-SOPCC}(v,B)$\; \label{sopcccall}
move to vertex $next_v$ and let $\xi_v$ be the incurred cost\;  \label{algo:movecost}
$B \leftarrow B-\xi_v$\; \label{algo:budgetupdate}
$v \leftarrow next_v$\; \label{algo:vertexupdate}
}
\eIf {$B>0$}{\KwRet Success}
{\KwRet Failure} \label{algo:failure}
\caption{Alternating Planning and Execution}
\label{algo:general}
\end{algorithm}

 The first advantage of this solution is that differently from our former works
\cite{CarpinRAL2021b,CarpinIROS2021b,CarpinIROS2021a} it is neither necessary to discretize the temporal dimension
to build a CMDP with a finite state space, nor it is necessary to numerically approximate the transition probabilities
between states.

\subsection{MCTS-SOPCC}
The MCTS-SOPCC algorithm to solve the SOPCC (line \ref{sopcccall} in algorithm \ref{algo:general}) customizes the 
general  MCTS approach  described in section \ref{sec:background} as follows.

{\bf Tree Policy}: 
The tree policy is used to traverse the tree from the root to a leaf and then select a new node to possibly add to the
tree. It is implemented by applying recursively the following strategy inspired by UCT. Assuming $v_i$ is the current vertex,
to each of its neighbors\footnote{From the set of neighbors we exclude the set of vertices already visited because
revisiting an already visited vertex does not give any reward and is therefore useless. In this context,
the terms neighbor and descendant are to be considered synonyms.} $v_j$ we associate the following quantity
(UCTF stands for UCT with Failure):
\begin{equation}\label{eq:uctf}
UCTF(v_j) = Q[v_j](1-F[v_j]) + z \sqrt{\frac{\log t}{N[v_j]}}
\end{equation}
where $t$ is the sum of the number of times that the descendants of $v_i$  have been explored already, and $z$ is a constant.
The node with the highest UCTF value is then selected and this process is repeated until a vertex not yet in 
the tree is selected.
As commonly done in  the basic UCT strategy, if  node $v_j$ has not yet been visited its $N[v_j]$ counter is 0
and  we then set its UCTF value to $\infty$, to make sure
all neighbors are visited at least once. The novel term $Q[v_j](1-F[v_j])$ is the \emph{expected utility} %\footnote{Note
%that assuming that the reward is 0 if the budget constraint is violated, this is indeed an expectation.}
of moving to $v_j$, obtained by multiplying the   estimated utility $Q[v_j]$ by 
the probability of success $(1-F[v_j])$. 
Note that as failure is a binary random variable, this is the expected utility.
In this way, given two vertices with similar $Q$ values, the
criterion  favors the one with the lowest failure probability. The last term in the UCTF formula 
is borrowed from UCT and encourages the selection of nodes that have been formerly selected fewer times.

{\bf Rollout Policy}:   when a new vertex $v_j$ is added as a child
of  a node $v_i$ already in the tree  (see Figure \ref{fig:rollout}), it is necessary to estimate  both: 1) how much reward one will
collect by expanding the path through $v_j$; 2)   the probability of exceeding the
available budget before reaching the end vertex $v_g$.

\begin{figure}[htb]
	\centering
	\includegraphics[width=0.3\linewidth]{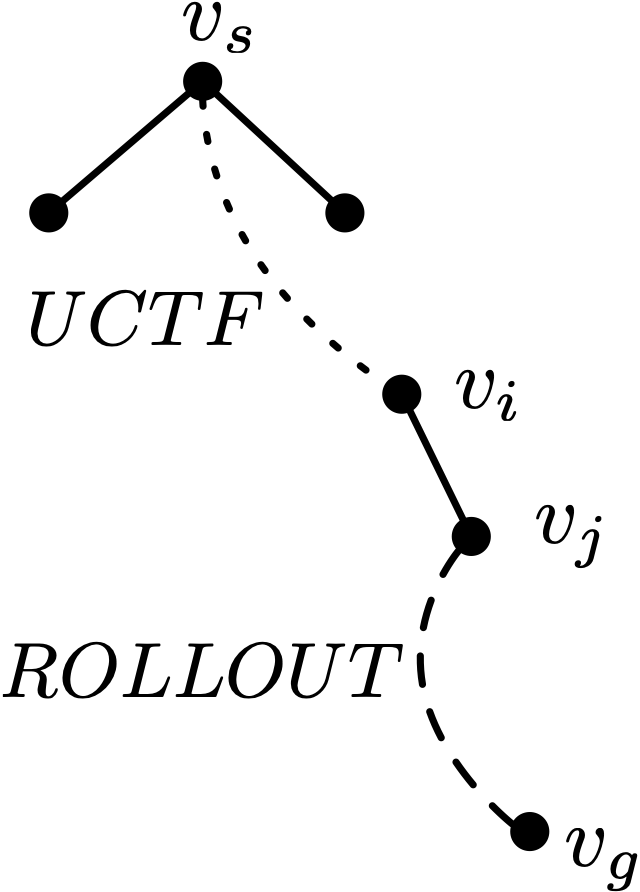}
		\caption{Assuming the tree is rooted in $v_s$, the tree policy UCTF repeatedly selects internal vertices in the tree $\mathcal{T}$ 
		until a vertex leaf $v_i$ is reached, and then a child node $v_j$ is added to the tree.  Then the rollout algorithm is run from $v_j$ to 
		estimate the values $Q[v_j]$ and $F[v_j]$ to be stored in $v_i$ and then propagated back to the root.}
	\label{fig:rollout}
\end{figure}

These values will be stored in $v_i$ as $Q[v_j]$ and $F[v_j]$, respectively, and are computed by the rollout policy
by generating $S$ paths from $v_j$. Each path  is generated using the process sketched in Algorithm  \ref{algo:rollout}.

\begin{algorithm}
	\KwData{leaf vertex $v_j\in V$,  residual budget $B'$}
	%\KwResult{Success or Failure}
	$current \leftarrow v_j$\;
	$path \leftarrow \{v_j \}$\;
	\While{{\bf true}}{
	%\While{$B>0$ \KwAnd $v\neq v_g$ }{
	\eIf{random() $< P_R$}{$new \leftarrow randomchild(current)$}{$new \leftarrow greedy(current)$}
	\If{$new ~\neq v_g$}{
		\If{ $\Pr[C(current,new,v_g) < B'] \leq P_f$}{
			append $new$ to $path$\;
			$B' \leftarrow B' - samplecost(current,new)$\;
			$current \leftarrow new$\;
		%	\If{$B'\leq 0$}{
	%		return $path$\;}
		}
	}
	\Else {
		append $new$ to $path$\;
		return $path$
	}
}
	\caption{Rollout}
	\label{algo:rollout}
\end{algorithm}

To determine the residual budget $B'$ (input parameter in  algorithm \ref{algo:rollout}), we sample the time  $t = \xi_1+\xi_2+\dots+\xi_k$ 
it takes to proceed from the root node $v_s$ to the leaf node	 $v_j$ along the unique path with $k$ edges determined by the
tree policy.
We then set $B'=B-t$ as the residual budget available when starting from $v_j$, where $B$ is the budget available
at the root node $v_s$ (this value is computed before Algorithm \ref{algo:rollout} is executed -- see algorithm \ref{algo:MCTS}).
The rollout procedure builds a path from $v_j$ to $v_g$ by iteratively trying to add vertices either randomly  (line 5),
or greedily (line 7).
Random children are added with probability $P_R$ while the greedy strategy is chosen with probability $1-P_R$. Suitable
$P_R$ values are instance dependent and can be identified with a preliminary grid search. This aspect is further discussed in 
Section \ref{sec:results}.
The greedy strategy works as follows. For all vertices $v_k$ not yet visited, 
compute $r(v_k)/c(current,v_k)$ where $current$ is the last vertex added to the path being built.
Then, discard the vertices such that $\Pr[c(current,v_k)+c(v_k,v_g)> B' ]> P_f$, and pick the one with the highest ratio.
If all vertices are discarded, the greedy step returns $v_g$.
To determine the vertices to discard, we generate $M$ samples for the costs of the edges and use these values
to estimate the probability of exceeding the budget $B'$. 
This is an instance of  the SAA approach described in Section \ref{sec:background}.
The greedy step 
adds to the path the vertex $v_k$ with the highest ratio between reward and cost, but constrained on having
estimated that the probability of moving from $v_k$ to the terminal $v_g$ does not exceed the failure probability $P_f$.
Note that the greedy step may select $v_g$ as the most suitable node to visit next. 
If the newly selected vertex $new$ is the goal vertex $v_g$, we add it to the path and terminate the rollout returning the path (lines 14 and 15).
Otherwise (lines 9 through 12), we consider the path with edges $(current,new)$ and $(new,v_g)$ and we evaluate if the probability that the
cost of this path exceeds $B'$ violates the constraint $P_f$ or not. If it violates the constraint, we reject $new$, otherwise, 
we add it to the path and update $B'$. To determine the probability of violating the constraint, we generate $M$ samples, 
as in the greedy step. It is easy to see that the rollout step is guaranteed to terminate returning a path.

{\bf Backup Policy}: After the $S$ paths from $v_j$ have been generated by the rollout procedure,
we can compute the expected return $Q[v_j]$ and estimate $F[v_j]$, i.e., the probability of failure of expanding the route from $v_i$
through $v_j$. The values are computed by  averaging the rewards and failures\footnote{
To estimate $F[v_j]$ we  take the ratio between the number of paths with cost exceeding the budget 
and the total number of paths $S$.} associated with the $S$ paths returned, as per the SAA approach.
These values must then be propagated upwards through the tree towards the root, as the paths generated from $v_j$
are subpaths of paths starting from the root and therefore influence the values for the  $Q$ and $F$ labels
associated with the children of the root note. In MCTS without constraints, this is often done by averaging the returns, but 
the case considered in this work is different because of the need to eventually select actions leading to paths whose
probability of violating the budget constraint does not exceed $P_f$. 
The backup step is then applied from node $v_j$ backwards towards the root  considering 
the relationship between the $Q$ and $F$ labels of the newly added node $v_j$,	 and the $Q$ and $F$ labels of its
parent node $v_i$. The step also involves the parent of $v_i$, if it exists (called $v_k$ in the following). We refer to figure
\ref{fig:backup} while explaining the process, and algorithm \ref{algo:backup} sketches the procedure (for the sake of brevity
the algorithm just shows the updates $Q$ and $F$, as updates for $N$ are just increases by one.) After the values are propagated
from $v_j$ to  $v_i$, the process is  recursively repeated backward starting from $v_i$, then from $v_k$, and so on until 
the root node is reached.

\begin{figure}[htb]
	\centering
	\includegraphics[width=0.4\linewidth]{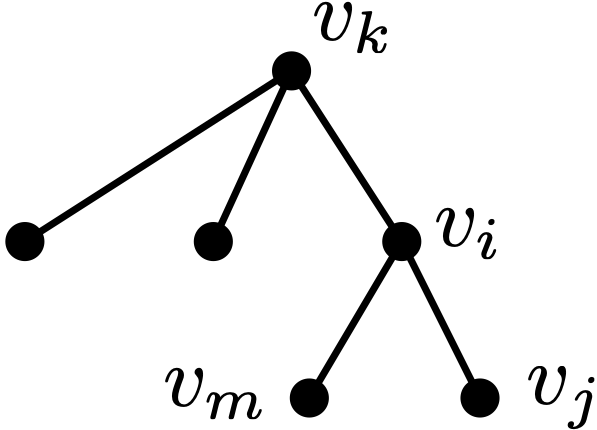}
	\caption{When backing up the values $F[v_j]$ and $Q[v_j]$ stored at node $v_i$, it is necessary to 
		consider their relationships with the values $F[v_i]$ and $Q[v_i]$ stored with $v_k$.}
	\label{fig:backup}
\end{figure}

The backup step at node $v_i$ is as follows (refer to algorithm \ref{algo:backup}).
If $v_i$ is the root, then $v_k$ does not exist, and the backup step does not perform any operation 
because  the values $F[v_j]$ and $Q[v_j]$  are already stored with $v_i$ (the loop in line 3 is never executed.)
If $v_k$ exists, it has its labels $F[v_i], Q[v_i]$  initialized already, and it may be necessary to propagate the
$F$ and $Q$ values upwards if a better path from root to leaf has been found.
We distinguish the following cases:

\begin{itemize}
	\item if $F[v_i] < P_f$ (this value is stored in $v_k$ -- test at line \ref{vifeasible}) it means $v_i$ is feasible for $v_k$. 
	The  new $F$ and $Q$ values determined for $v_j$ should be propagated
	up only if the new path  extending from $v_i$ through $v_j$ is both 1) feasible; 2) provides a better return. The first condition 
	is checked in line \ref{vjfeasible}, while the second condition is checked in line \ref{vjbetter}. To this end, we compare the value $Q[v_i]$ stored in $v_k$
	with the expected value of the new path found, which is $r(v_i)$ (reward obtained visiting $v_j$) plus $Q[v_j]$ (reward obtained
	from $v_i$ onwards). 
	\item  if $F[v_i] > P_f$ it means that the best path  from $v_k$ through $v_i$ violates the chance constraint. In this case, if 
	$F[v_i] > F[v_j]$ then a better path from $v_k$ through $v_i$ has been determined, irrespective of the $Q$ values, because it
	lowers the failure probability (possibly bringing it below $P_f$.) In this case, the values of the new path from $v_k$ to $v_i$ and
	then to $v_j$ are propagated to $v_k$ (lines 10 and 11.)
\end{itemize}
After the values are propagated from $v_i$ to its parent $v_k$, the same mechanism is recursively applied one level up (lines
12 and 13).

\begin{algorithm}
	\KwData{$v_j,Q[v_j],F[v_j]$}
	$v_i \leftarrow v_j.parent$\;
	$v_k \leftarrow v_i.parent$\;
	\While{$v_k$ {\bf is not null}}{
			\If{$v_k.F[v_i] < P_f$ }{ \label{vifeasible}
				    \If{{$v_i.F[v_j] < P_f$}}	{   \label{vjfeasible}
				    	\If{$v_k.Q[v_i] < v_i.Q[v_j]  + r(v_i)$}	{   \label{vjbetter}
				    	 $v_k.Q[v_i] \leftarrow v_i.Q[v_j]  + r(v_i)$\;
				    	$v_k.F[v_i] \leftarrow v_i.F[v_j]$\;
				    }
				}
			}
			\ElseIf{$v_k.F[v_i] > v_i.F[v_j]$ } {
				$v_k.F[v_i] \leftarrow v_i.F[v_j]$ \;
			    $v_k.Q[v_i] \leftarrow v_i[v_j]  + r(v_i)$\;
			}				 
			$v_k \leftarrow v_k.parent$\;
		$v_i \leftarrow v_i.parent$
		}
	\caption{Backup}\label{algo:Backup}
	\label{algo:backup}
\end{algorithm}

{\bf Action Selection:} After the tree $\mathcal{T}$ has been built, the best action available from the root node $v$ is selected.
The  best action is defined  as the feasible node $v_j$ with the highest value $Q[v_j]$ (recall definition \ref{def:feasible}).
If among the children of the root node no feasible node is found (i.e., all nodes connected to the root have an $F$ value exceeding 
the failure probability $P_f$), then action selection returns $v_g$, i.e., it tries to move the robot
to the final vertex $v_g$ in the orienteering graph $V$.   

Algorithm \ref{algo:MCTS} shows how the components described above are put together. 
As  in MCTS algorithms, the tree is expanded through a fixed number of iterations $K$.
At each iteration, the UCTF criterion is used as a tree policy to move from the root
of the tree to a node $v_j$  (line \ref{alg:treepolicy}) that  is  added to the tree if it is not already present.
Then, $S$ paths are generated from $v_j$ (line \ref{alg:s_steps}) using the rollout 
process formerly described (line \ref{alg:rollout}). At each rollout step, the
algorithm considers a different residual budget $B'$ obtained by sampling the time $t$
to move from the root of the tree to the new node $v_j$ (line \ref{alg:sampletime}).
After the $S$ samples are collected, the values $Q[v_j]$ and $F[v_j]$ for $v_j$ can be computed
(line \ref{alg:qf}) and propagated back to the root with the backup procedure (line \ref{alg:update}).
As the last step, the actions counter $N$ is also propagated back to the root (line \ref{alg:counterbackup}.)
Finally, the algorithm returns the action from the root with the highest $Q$ value among the
feasible ones (line \ref{alg:selection}). If no feasible action is found, $ActionSelection$ returns
the goal vertex $v_g$.  Alternatively, when no action is found, one could further extend the tree by running SOPCC again and extending
the current tree rather than restarting from scratch, though this is not done in the experiments we consider
in this paper.

\begin{algorithm}
	\KwData{start vertex $v$,  budget $B$}
	Initialize tree $\mathcal{T}$ with root equal to $v$\;
	\For { $K$ iterations}{ \label{alg:mainiteration}
		$v_j \leftarrow  UCTF(v)$\; \label{alg:treepolicy}
			\For { $S$ iterations}{ \label{alg:s_steps} 
				$t \leftarrow $ SampleTraverseTime($v,v_j$)\; \label{alg:sampletime}
				$B' \leftarrow B - t$\;
				$path \leftarrow $rollout($v_j,B'$)\; \label{alg:rollout}
			}
		compute $Q[v_j]$ and $F[v_j]$ based on the $S$ rollouts \; \label{alg:qf}
		\eIf{$v_j$ is not a child node of $v$}{add $v_j$ as child node of $v$ with  $Q[v_j]$ and $F[v_j]$\; \label{alg:addnode}}
		{
			update estimates of  $Q[v_j]$ and $F[v_j]$ stored in $v$\;
		}
	    Backup($v_j,Q[v_j],F[v_j]$)\; \label{alg:update}
	    BackupN($v_j$)\; \label{alg:counterbackup}
}
\KwRet ActionSelection($root(\mathcal{T})$)\; \label{alg:selection}
	\caption{MCTS-SOPCC}\label{algo:MCTS}
\end{algorithm}

\section{Theoretical Properties}
\label{sec:thprop}
In this section we provide some theoretical properties of the algorithm we proposed. 
We start asserting the asymptotic probabilistic convergence of the method we defined, and then 
analyze the error probability as a function of the finite number of iterations $K$.

 \subsection{Asymptotic Convergence} 

The following theorem characterizes the asymptotic behavior of the MCTS-SOPCC algorithm, i.e., its performance for
large values of the number of independent rollouts $S$. The results build upon the fact that
all iterations of the loop at line \ref{alg:s_steps}  in algorithm \ref{algo:MCTS} are independent,
and therefore the $Q$ and $F$ values computed in line \ref{alg:qf} are independent random variables.

\begin{theorem}\label{th:asymp}
	For $\lim_{S\rightarrow \infty}$, if the MCTS-SOPCC algorithm returns a feasible node $v_j$, then with probability $1-F[v_j]$ there is a solution to the
	SOPCC with expected value $Q[v_j]$.
\end{theorem}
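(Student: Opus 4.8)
The plan is to exploit the observation---already recorded just before the theorem statement---that the $S$ rollouts generated in the loop at line~\ref{alg:s_steps} of Algorithm~\ref{algo:MCTS} are independent and identically distributed, and then to invoke the Sample Average Approximation convergence recalled in Equation~\eqref{eq:saa}. First I would make precise the object whose statistics are being estimated: the rollout of Algorithm~\ref{algo:rollout}, together with the sampling of the traversal time used to fix the residual budget $B'$, defines a \emph{randomized path policy} $\pi_{v_j}$ that, starting from $v_j$, produces a random path toward $v_g$ and either reaches it within budget or exhausts the budget. Each of the $S$ rollouts is one independent realization of $\pi_{v_j}$, producing a reward $R_i$ and a binary failure indicator $\mathbb{I}_i$ (equal to $1$ iff the sampled cost exceeds the budget). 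The quantities computed at line~\ref{alg:qf} are exactly the empirical means $Q[v_j]=\frac{1}{S}\sum_{i=1}^S R_i$ and $F[v_j]=\frac{1}{S}\sum_{i=1}^S \mathbb{I}_i$.

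Next I would apply the strong law of large numbers, in the same form underlying the SAA estimator~\eqref{eq:saa}. Since the rewards are positive and finite and a path visits each vertex at most once, the $R_i$ are bounded, and the $\mathbb{I}_i$ are Bernoulli; hence, as $S\to\infty$, almost surely
\begin{align}
\lim_{S\to\infty} Q[v_j] &= \mathbb{E}[R], \\
\lim_{S\to\infty} F[v_j] &= \Pr[C>B],
\end{align}
where $\mathbb{E}$ and $\Pr$ are taken with respect to the policy $\pi_{v_j}$. Thus, in the limit, $Q[v_j]$ is the true expected reward collected by $\pi_{v_j}$ and $F[v_j]$ is its true probability of violating the budget. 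The policy $\pi_{v_j}$ is then the witness required by the statement: it is an executable solution that reaches $v_g$ without exceeding $B$ with probability $1-F[v_j]$, and whose expected collected reward equals $Q[v_j]$. To close the argument I would combine Definition~\ref{def:feasible} with feasibility of the returned node: the algorithm returns $v_j$ only when it is feasible, so $F[v_j]\le P_f$, and by the previous step $\Pr[C>B]\le P_f$ in the limit, which is precisely the chance constraint of problem~\eqref{eq:sopcc}. Hence $\pi_{v_j}$ is a \emph{feasible} solution of the SOPCC with expected value $Q[v_j]$, completing the claim.

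The step I expect to require the most care is reconciling the per-leaf rollout estimates with the labels actually returned at the root, because the backup of Algorithm~\ref{algo:backup} propagates the values of the \emph{best feasible} path by repeated comparison and selection rather than by plain averaging. Here I would use that the graph is finite and admits only finitely many ($(n-2)!$) candidate paths, so the root labels are a maximum/selection over finitely many per-path estimators, each converging almost surely by the argument above; since the maximum and the feasibility test are continuous in their arguments away from the threshold $P_f$, the selected path stabilizes and the returned pair converges to the statistics of the corresponding limiting policy. The only delicate point is the boundary case of a path whose true failure probability equals $P_f$ exactly; I would dispatch it by noting that such ties have probability zero under continuous edge pdfs, or by reading the non-strict inequality of Definition~\ref{def:feasible} so that the limiting classification remains consistent.
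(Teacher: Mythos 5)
Your proof is correct and rests on the same foundation as the paper's: the $S$ rollouts in the loop at line~\ref{alg:s_steps} are i.i.d., so the per-leaf estimates converge by the law of large numbers, and since failure is a Bernoulli indicator its mean is exactly the failure probability; feasibility of the returned node then transfers the chance constraint to the limit. Where you diverge is in handling the reconciliation between leaf-level averages and the labels actually sitting at the root child, which you correctly flag as the delicate step. The paper dispatches it with a short case analysis tied to the code of Algorithm~\ref{algo:backup}: either the returned node's labels were never overwritten (so they are plain rollout averages and LLN applies directly), or they were overwritten, in which case Backup only ever overwrites with labels of a better \emph{feasible} path (lines 7--8) or lowers $F$ of an \emph{infeasible} one (lines 10--11); since the returned node is feasible, the last write was of the first kind, so the stored $F$ is still a plain rollout average from some deeper leaf and converges to its mean. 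You instead view the root labels as a selection/maximum over the finitely many per-path estimators, argue almost-sure convergence of each, and invoke stabilization of the selection away from ties. Your route is more explicit about the semantic content of the theorem (you actually construct the witness, the randomized rollout policy $\pi_{v_j}$, which the paper leaves implicit) and more honest about the fact that the stored pair may switch between candidate paths across iterations; the paper's route is shorter and avoids any tie-breaking discussion by reasoning only about the type of the final update. One small caveat on your boundary argument: whether a path's true failure probability equals $P_f$ exactly is a deterministic property of the problem instance, not a random event, so ``ties have probability zero'' is not quite the right dispatch; your alternative---relying on the non-strict inequality of Definition~\ref{def:feasible}, together with the fact that the theorem is conditional on a feasible node being returned---is the correct one. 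Also note that the theorem only needs convergence in probability (the paper uses the weak law), so your appeal to the strong law is harmless but stronger than required.
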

\begin{proof}
	
	First, observe that by the weak law of large numbers as the number of independent rollouts $S$ increases, the computed values
	for $F$ and $Q$ converge in probability to their expected values. More specifically, since $F$ is  the indicator variable
	for the event ``exceeding the budget $B$," 	its expectation is then equal to the probability of the  associated event.
	
	By construction, the node $v_j$ returned by MCTS-SOPCC is always an immediate descendant of the
	node $v$ stored at the root of the tree $\mathcal{T}$.
	To prove the claim we separately consider the cases where $v_j$ is a leaf node or not. The theorem 
	applies only to the case where the algorithm returns a feasible node $v_j$. If the root has no feasible child,
	then the algorithm returns $v_g$, as discussed when describing how action selection is implemented. 
	
	If the feasible node $v_j$ is a leaf node in the tree,   the values $Q[v_j]$
	and $F[v_j]$ were stored when $v_j$ was added to the tree and were never modified because $v_j$ has no 
	descendants. These values, by construction,
	were obtained with $S$ independent rollouts from $v_j$ (loop at line \ref{alg:sampletime}, and line \ref{alg:qf} in 
	algorithm \ref{algo:MCTS}), and therefore the claim directly  holds by the law of large numbers, as stated above.
	
	If $v_j$ has one or more descendants, we need to distinguish two cases. If the   $Q$ and $F$ values were never
	modified after $v_j$ was added, then we are in the same case as above and the statement therefore holds.
	If the  $Q$ and $F$ values were modified (one or more times), then that happened during a call to the \emph{Backup} function.
	In \emph{Backup}, the $Q$ and $F$ are modified only to either 1) increase the $Q$ value while ensuring
	that $v_j$ remains feasible (lines 7 and 8 in algorithm \ref{algo:backup}), or 2)  decrease the $F$ value while $v_j$ is unfeasible 
	(lines 10 and 11 in algorithm \ref{algo:backup}). However, since the node is feasible, the last update must have been 
	an update of the first type and the stored $F$ value converges to its mean as $S$ increases.
	
\end{proof}

True to the MCTS spirit, the presented algorithm is an anytime algorithm, i.e., by increasing the value 
of the parameters $K$ and $S$ the quality of the returned solution increases. This is in contrast to our former 
works where a solution is produced only after the whole state space for the CMPD has been 
constructed and the associated linear program solved. In section \ref{sec:results} we will assess how 
$K$ influences the quality of the solution and the computation time. The other relevant parameter
is $S$, the number of samples used to estimate the failure probability of a path from the root to a leaf.
Obviously, the larger the number of samples, the more accurate the estimate and the associated computation time.
As common for these types of algorithms, the probability of not finding the solution (if it exists) tends to 0 as the 
values of $K$ and $S$ increase. The convergence velocity is influenced by the size of the search space, i.e., by the 
average branching factor in the tree.

 \subsection{Error Rates} 

While theorem \ref{th:asymp} provides a guarantee regarding the asymptotic behavior of the algorithm, it is interesting
to reason about the performance and error rates for finite values of $K$, i.e., the number of iterations.
In particular, we are interested in the probability that the final step in algorithm \ref{algo:MCTS} selects the wrong action, 
i.e., it picks a suboptimal vertex. To this end, it is useful to draw a comparison with bandit algorithms and the concept of
regret. In bandit algorithms, one is interested in balancing exploration and exploitation while bounding the cumulative regret.
However, in the problem we consider there is no cost for exploration (done thorough simulation), and the only cost 
to consider is the so called \emph{simple regret}, i.e., the the cost incurred by making the wrong choice based on 
the data available. Hence, the problem we consider is related to a special class of bandit problems known as 
\emph{pure exploration} problems (see \cite{BanditsBook}, chapter 33, as well as our recent paper \cite{CarpinIROS2023b} for more details.)
Without loss of generality, assume that after $\mathcal{T}$ has been built the optimal choice is $v_1$, 
and the values $F[v_1]$ and $Q[v_1]$ are stored in the root note.
The function ActionSelection($root(T)$) will return the wrong action in two mutually exclusive cases:
\begin{itemize}
	\item when $F[v_1] > P_f$, i.e., when the estimate of the failure probability of action $v_1$ exceeds the assigned failure
	probability. Note that since $v_1$ is the optimal choice, it must be feasible, and therefore its failure probability must
	be smaller than $P_f$. Hence this condition corresponds to an overestimation of such failure probability.
	\item when $F[v_1] \leq P_f$ but there is another suboptimal feasible vertex, say $v_2$, such that $Q[v_2] > Q[v_1].$
\end{itemize}
We now separately study both these conditions. We start observing that \emph{failure}, i.e., overrunning the budget, 
is a binary condition, and is therefore modeled by a Bernoulli random variable with parameter $p$. 
The return  associated with an action is instead a non-negative random variable estimated by $Q[v_i]$.

The following theorem characterizes  the probability of wrongfully estimating that the optimal action $v_1$ is infeasible, i.e., 
it provides a bound on $\Pr[F[v_1] > P_f]$ as a function of 
 the number of times $N[v_1]$ that action $v_1$ has been tried  (the first condition defined above). 
% of iterations $K$ (the first condition defined above). 
Let us define $f(v_1)$ as the parameter $p$ describing the Bernoulli variable modeling the binary event \emph{failure after
	selecting $v_1$.}
Therefore $F[v_i]$ is a random variable estimating $f(v_i)$.
Let $\Delta = P_f - f(v_1)$ and since $v_i$  is feasible, it follows that $\Delta > 0$. Intuitively,   $\Pr[F[v_1] > P_f]$ should shrink
as $N[v_1]$ grows because more samples lead to a more accurate estimate, and 
it should grow when $\Delta$ decreases because
when $f(v_1)$ approaches $P_f$ even small estimation errors may lead to wrongly decide $v_1$ as unfeasible.  
The following theorem formalizes these intuitions.

\begin{theorem}
	Let $F[v_1]$ be the estimate of $f(v_1)$ after the tree $\mathcal{T}$ has been built with $K$ iterations with $N[v_1]$
	attempts for action $v_1$; let $\Delta=P_f-f(v_1)$. Then
	\[
	\Pr[F[v_1] > P_f] \leq \sqrt{\frac{f(v_1)(1-f(v_1))}{2\pi N[v_1] \Delta^2}}e^{\left( -\frac{N[v_1] \Delta^2}{2 f(v_1)(1-f(v_1))}\right)}
	\]
\end{theorem}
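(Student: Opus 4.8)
The plan is to recognize $F[v_1]$ as the sample mean of $N[v_1]$ independent Bernoulli trials and to control its upper tail via a Gaussian approximation followed by a Mills-ratio tail bound. Concretely, since each of the $N[v_1]$ rollouts associated with action $v_1$ independently reports \emph{failure} (budget overrun) with probability $f(v_1)$, the estimate is $F[v_1] = \frac{1}{N[v_1]}\sum_{i=1}^{N[v_1]} B_i$ with $B_i$ i.i.d. Bernoulli$(f(v_1))$. Hence $\mathbb{E}[F[v_1]] = f(v_1)$ and $\mathrm{Var}[F[v_1]] = f(v_1)(1-f(v_1))/N[v_1]$. First I would rewrite the event of interest as a centered deviation: because $\Delta = P_f - f(v_1) > 0$ by feasibility of $v_1$, we have $\Pr[F[v_1] > P_f] = \Pr[F[v_1] - f(v_1) > \Delta]$.

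Next I would invoke the central limit theorem (consistent with the large-sample regime already used in Theorem~\ref{th:asymp}) to approximate the standardized estimator $Z = (F[v_1] - f(v_1))/\sqrt{f(v_1)(1-f(v_1))/N[v_1]}$ as a standard normal random variable. Under this approximation the event $F[v_1] - f(v_1) > \Delta$ becomes $Z > x$ with threshold $x = \Delta\sqrt{N[v_1]}/\sqrt{f(v_1)(1-f(v_1))}$.

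The key analytic ingredient is the standard upper bound on the Gaussian tail (the Mills-ratio inequality): for $x>0$, $\Pr[Z > x] \leq \frac{1}{x\sqrt{2\pi}} e^{-x^2/2}$, which follows by bounding $1 \leq t/x$ for $t \geq x$ inside the tail integral $\int_x^\infty \phi(t)\,dt$ and integrating the resulting $\frac{t}{x}\phi(t)$ in closed form. Substituting the explicit $x$ yields $\frac{1}{x\sqrt{2\pi}} = \sqrt{\frac{f(v_1)(1-f(v_1))}{2\pi N[v_1]\Delta^2}}$ for the prefactor and $\frac{x^2}{2} = \frac{N[v_1]\Delta^2}{2 f(v_1)(1-f(v_1))}$ for the exponent, which together reproduce exactly the claimed bound. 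This final substitution is routine; the only care needed is keeping the algebra of $\sigma^2 = f(v_1)(1-f(v_1))$ and $n = N[v_1]$ straight so that the $\sqrt{n}$ factors land correctly.

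The step I expect to be the main obstacle, or at least the one requiring an explicit caveat, is the Gaussian approximation itself: the CLT is asymptotic in $N[v_1]$, so for finite $N[v_1]$ the stated inequality is an approximation rather than a strict bound, the exact law of $N[v_1]F[v_1]$ being Binomial. A fully rigorous finite-sample statement would either carry a Berry--Esseen error term or resort to a Chernoff/Hoeffding bound, but those yield an exponent of the form $e^{-2N[v_1]\Delta^2}$ without the $\sqrt{\cdot}$ prefactor and with a different constant, so they do not match the stated expression. I would therefore present the result as following from the normal approximation to the sample mean together with the Mills-ratio tail bound, which is precisely the combination that produces both the prefactor and the factor $1/(2\sigma^2)$ appearing in the exponent.
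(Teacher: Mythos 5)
Your proposal is correct and matches the paper's proof in substance: the paper invokes as a black box the inequality $\Pr[\hat{\mu} \geq \mu + \varepsilon] \leq \sqrt{\sigma^2/(2\pi n \varepsilon^2)}\,e^{-n\varepsilon^2/(2\sigma^2)}$ (attributed to the Cram\'er--Chernoff discussion in \cite{BanditsBook}), then substitutes the Bernoulli variance $f(v_1)(1-f(v_1))$ and $\varepsilon = \Delta$ exactly as you do, and your CLT-plus-Mills-ratio derivation is precisely the origin of that cited inequality. Your closing caveat --- that this is a normal-approximation bound rather than a rigorous finite-sample inequality, in contrast to a Hoeffding-type bound --- is well taken and applies equally to the paper's own proof, since the expression it cites arises as an asymptotic approximation, not a finite-$n$ theorem.
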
 
\begin{proof}
	We start recalling the  Cram\'er-Chernoff method (see e.g., \cite{BanditsBook}, chapter 5). Let $X$ be a 
	random variable with $\mu=\mathbb{E}[X]$ and finite variance $\sigma^2$. If $\hat{\mu}=\frac{1}{n}\sum_{i=1}^n X_i $
	where the $X_i$s are a sequence of independent identically distributed samples of $X$, then for $\varepsilon >0$
	
	\begin{equation}\label{eq:chernoff}
		\Pr[\hat{\mu} \geq \mu + \varepsilon] \leq 
		\sqrt{\frac{\sigma^2}{2\pi n \varepsilon^2}}e^{\left( -\frac{n \varepsilon^2}{2 \sigma^2}\right)}
	\end{equation}
	
	Next, observe that  $F[v_1]$ is the estimate of $f(v_1)$ obtained through $N[v_1]$ independent simulations. 
	Moreover, the Bernoulli variable modeling the \emph{failure} event has variance $f(v_1)(1-f(v_1))$.
	Note furthermore that because of how we defined $\Delta$  the event $F[v_1] > P_f$ is equivalent to the event $F[v_1] > f(v_1) + \Delta$.
	Substituting these values in Eq.~\eqref{eq:chernoff}, the claim follows.
\end{proof}

We finally turn to the second source of error, i.e., the case where the algorithm correctly estimates that the optimal action $v_1$ is
feasible but selects a different feasible action $v_2$ because it estimates that it has a higher return. We start observing that while
the failure probability is modeled by a Bernoulli random variable, returns are random variables with positive support
that in general do not have a known distribution. 
Consequently, 
in the analysis the set of tools we can use is much more limited. Chebychev's inequality, stated below, is applicable for any random
variable $X$,  and  for any $\varepsilon > 0$ states
\[
\Pr[|X-\mathbb{E}[x]| \geq \varepsilon] \leq \frac{\mathbb{V}[X]}{\varepsilon^2}
\]
where $\mathbb{E}[x]$ is the expectation of $X$ and $\mathbb{V}[X]$ is its variance. Mirroring the notation introduced
above, let $Q[v_1]$ and $Q[v_2]$ be the random variables estimating the expected return when selecting $v_1$ and $v_2$, respectively.
Moreover, $q(v_1)$ and $q(v_2)$ are the expected returns when selecting $v_1$ and $v_2$.

\begin{theorem}\label{th:errorQ}
	Let $v_1$ be the optimal feasible action and let $v_2$ be a suboptimal feasible action. Then, the probability $P_{ERR}$ that
	$v_2$ is selected instead of $v_1$ is bounded by 
	\[
	P_{ERR} \leq	\frac{\sigma^2_z}{\min\{N[v_1],N[v_2]\}G}
	\]
	where: $\sigma^2_z$ is the variance of the random variable $z = q(v_2)-q(v_1)$ and $G = q(v_1) - q(v_2).$
\end{theorem}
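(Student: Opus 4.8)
The plan is to reduce the selection error to a one-sided large-deviation event for a single scalar estimator and then control it with Chebyshev's inequality, which is the only concentration tool available here because the returns have positive support but no assumed distribution. First I would pin down the error event: conditioned on $v_1$ being correctly classified as feasible, the action-selection step prefers $v_2$ exactly when its estimated return is the larger, i.e., when $Q[v_2] > Q[v_1]$. I read the statement's $z$ as the difference of the \emph{estimators}, $z = Q[v_2] - Q[v_1]$ (the variance of the constant $q(v_2)-q(v_1)$ would be zero), so that $\mathbb{E}[z] = q(v_2) - q(v_1) = -G$ with $G>0$ since $v_1$ is optimal. The error event is then $\{z > 0\} = \{z - \mathbb{E}[z] > G\}$, a deviation of $z$ above its mean by the whole gap $G$.

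Next I would invoke the Chebyshev inequality quoted immediately before the theorem. Since $\{z - \mathbb{E}[z] > G\} \subseteq \{|z - \mathbb{E}[z]| \geq G\}$, one immediately obtains $P_{ERR} \le \mathbb{V}[z]/G^2$. It then remains to translate $\mathbb{V}[z]$ into sample counts. Because $Q[v_1]$ and $Q[v_2]$ are empirical means of $N[v_1]$ and $N[v_2]$ independent rollout returns, their difference has variance that decays with the number of samples; defining $\sigma_z^2$ as the pooled per-sample variance of the return difference and bounding both averaging denominators by the smaller one gives $\mathbb{V}[z] \le \sigma_z^2/\min\{N[v_1],N[v_2]\}$. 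Substituting this into the Chebyshev estimate yields a bound of the claimed shape.

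I expect two steps to need care. The first is the collapse of $\mathbb{V}[z] = \sigma_1^2/N[v_1] + \sigma_2^2/N[v_2]$ into the single term $\sigma_z^2/\min\{N[v_1],N[v_2]\}$: this requires fixing $\sigma_z^2$ as a common per-sample bound on the variance of both return streams together with a monotonicity argument that replacing each sample count by their minimum only enlarges the bound. The second, and the one I would scrutinize most, is the exponent of $G$: a direct Chebyshev argument delivers $G^2$ in the denominator, whereas the displayed bound carries a single power of $G$. I would therefore re-derive this step carefully to determine whether a one-sided sharpening (Cantelli's inequality) or a rescaling that absorbs a factor of $G$ into $\sigma_z^2$ is intended, or whether the denominator should in fact read $G^2$.
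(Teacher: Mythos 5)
Your proposal follows essentially the same route as the paper's proof: define the estimator difference $Z = Q[v_2]-Q[v_1]$, observe that the selection-error event $\{Z>0\}$ coincides with the deviation event $\{Z - \mathbb{E}[Z] > G\}$, apply Chebyshev's inequality, and account for the variance via $\sigma_z^2/\min\{N[v_1],N[v_2]\}$ (the paper handles the unequal sample counts by pairing samples and discarding the excess, which is in substance your pooled-variance argument). Your scrutiny of the exponent of $G$ is also warranted: the paper's proof applies Chebyshev to $\Pr\left[|Z-\mathbb{E}[Z]|\geq G\right]$ exactly as you do, which yields $G^2$ in the denominator, so the single power of $G$ in the displayed statement is inconsistent with the paper's own argument and should read $G^2$ (no Cantelli-type sharpening or rescaling is involved; note that for $G<1$ the printed bound would claim strictly more than Chebyshev delivers). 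Likewise, your reading of $\sigma_z^2$ as the per-sample variance of the return difference, rather than the variance of the constant $q(v_2)-q(v_1)$, matches what the paper's proof actually uses.
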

\begin{proof}
	First observe that $Q[v_1]$ is an estimate of $q_1$ obtained by averaging $N[v_1]$ independent samples, and that $q(v_1)$ is its expectation.
	The same relationships hold between $Q[v_2]$ and $q(v_2)$. Next, since we assumed $v_1$ is optimal and $v_2$ is suboptimal, it follows
	that $G = q(v_1) - q(v_2) > 0$. Let us now introduce the new random variable $Z=Q[v_2]-Q[v_1]$. This can be thought as taking the 
	average difference between $\min\{N[v_1],N[v_2]\}$ samples of $Q[v_2]$ and $Q[v_1]$ (as in general the number of samples will be different, 
	we consider the minimum between the two, and discard those in excess.) Therefore, its variance is $\sigma^2_z/\min\{N[v_1],N[v_2]\}$.
	From the linearity of expectation, it follows that 
	$\mathbb{E}[Z]  = q(v_2)-q(v_1)$. Since ActionSelection returns the feasible action with the highest estimated $Q$ value, 
	it will return $v_2$ instead of $v_1$ only if $Q[v_2] > Q[v_1]$, i.e., only if $Z > 0$.  We define $P_{ERR} = \Pr[Z>0]$
	as this is the probability that ActionSelection selects the wrong action.
	Next, consider:
	\[
	Z > 0 \Leftrightarrow Z > q(v_2) - q(v_1) + q(v_1) - q(v_2) \Leftrightarrow Z > \mathbb{E}[Z]+G
	\]
	Hence the claim follows by applying Chebychev's inequality to $\Pr[|Z-\mathbb{E}[Z]|\geq G]$ and recalling that
	$\mathbb{V}[X]= \sigma^2_z/\min\{N[v_1],N[v_2]\}.$
\end{proof}

We conclude by observing how the bound provided in theorem \ref{th:errorQ} depends on the gap $G$. 
For small values of $G$ the bound becomes loose. However,  small $G$ values mean that the simple regret 
incurred by selecting the suboptimal vertex $v_2$ rather than the optimal vertex $v_1$ is small, and therefore 
the impact on performance is limited.
On the contrary,
as $G$ increases (and therefore the simple regret increases), the probability of making the wrong choice decreases.
Additionally, the error probability decreases linearity with the smallest number of samples between $v_1$ and $v_2$.

\section{Results}
\label{sec:results}

\begin{figure*}[htb]
	\centering
	\includegraphics[width = 0.22 \textwidth]{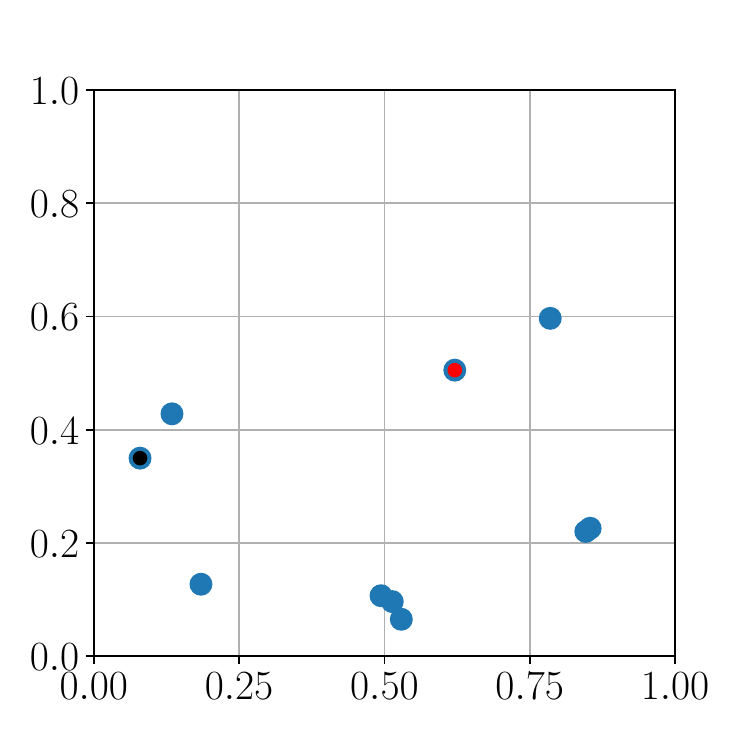}
	\includegraphics[width = 0.22 \textwidth]{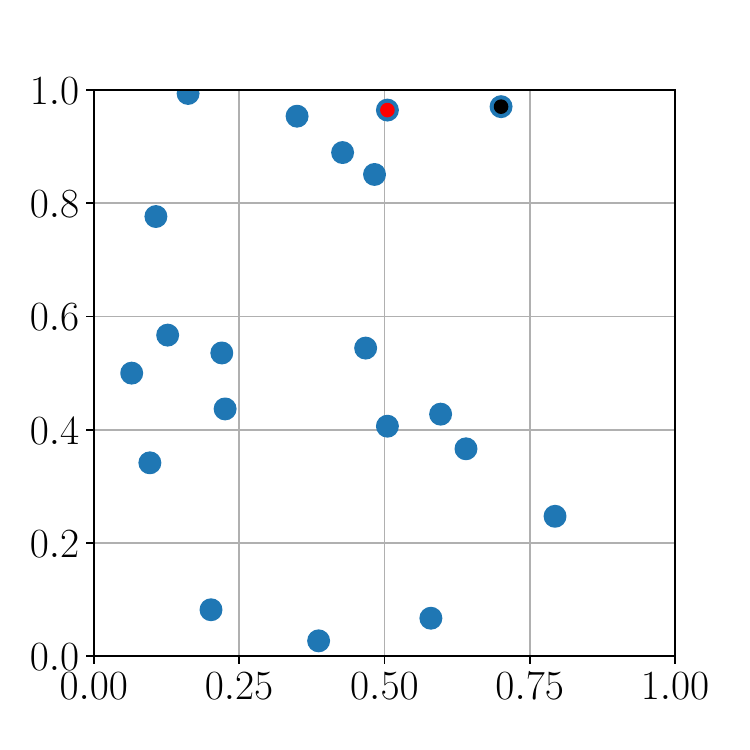}
	\includegraphics[width = 0.22 \textwidth]{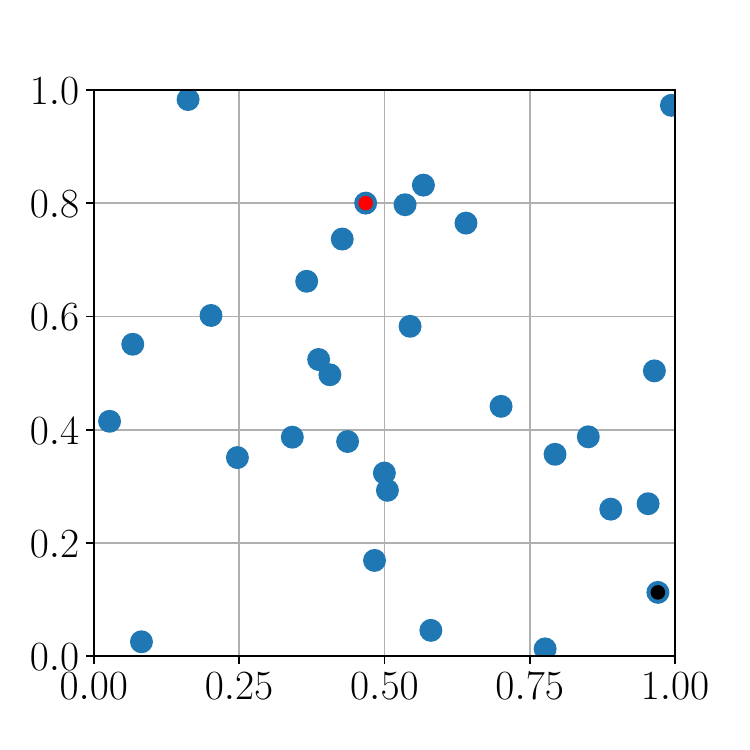}
	\includegraphics[width = 0.22 \textwidth]{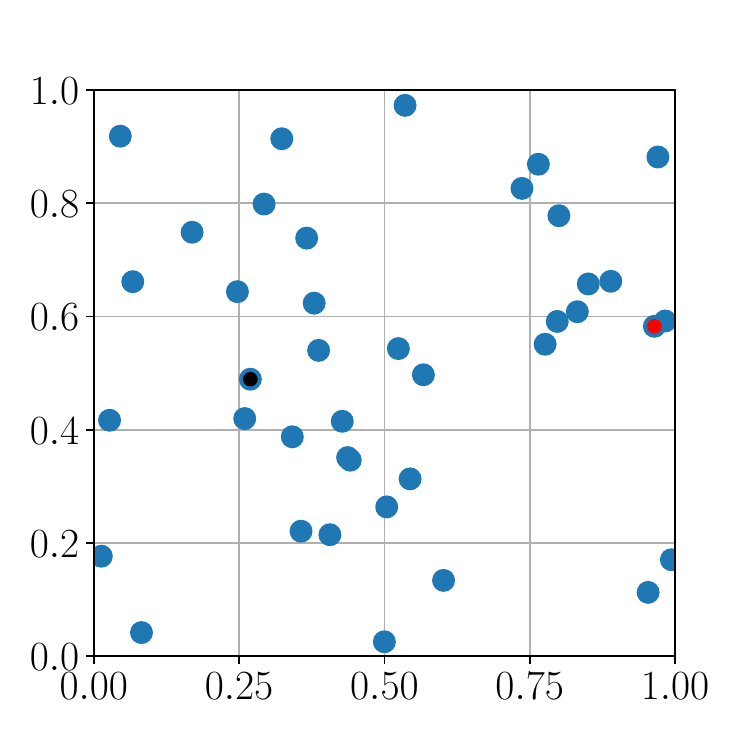}
	\caption{The four test graphs used for our initial benchmarking. In all graphs
		the start vertex is marked in red and the end vertex is marked in black.}
	\label{fig:graphs}
\end{figure*}

In this section, we provide two types of results. First, we assess the sensitivity of the proposed algorithm to the parameters
{ $P_R$ (balance between random exploration and greedy rollout),}
$K$ (number of iterations) and $S$ (number of simulations to estimate the failure probability). 
Next, we make some comparisons with an alternative method formerly proposed in literature { solving both 
	our formerly developed problem instances, but also  third party benchmark problems.\footnote{ All code needed to 
	produce these results and charts is available on {\tt https://github.com/ucmrobotics/StochasticOrienteering}.}} 
In all our tests we kept the parameter $z$ (coefficient in the UCTF quantity defined in Eq.~\eqref{eq:uctf}) equal to 3.
This value was selected after a few preliminary tests showing this is a good compromise between exploration and exploitation.
{
All numerical tests were executed on a system with an Apple M1 Max processor and 32 GB of RAM, and  our MCTS code is written in Python.
}

To ease the comparison with our  previous work { \cite{CarpinRAL2021b,CarpinIROS2021b,CarpinIROS2021a} we start considering} 
the same setup to generate test cases. The $n$ vertices in the graph are randomly sampled inside the unit square,
and rewards are sampled from a uniform distribution with support $[0,1]$. All graphs are complete, i.e., $(v_i,v_j) \in E$
for each $v_i \neq v_j$. Figure \ref{fig:graphs} shows these benchmark cases.

The random cost associated with edge $(v_i,v_j)$ is
\begin{equation}\label{eq:noise}
\kappa d_{i,j} + \mathcal{E}\left( \frac{1}{(1-\kappa)d_{i,j}}\right)
\end{equation}
where $d_{i,j}$ is the Euclidean distance between $v_i$ and $v_j$ and $\mathcal{E}(\lambda)$  is a random 
sample from the exponential distribution with parameter $\lambda$. This formulation ensures
that the expected cost to traverse $(v_i,v_j)$ is equal to $d_{i,j}$ and the cost is non-negative. In all our 
experiments the parameter $\kappa$ is set to 0.5. Before discussing the results, it is worth
outlining that complete graphs are the most challenging to deal with because every node in the tree
has the maximum possible branching factor, and for $n$ vertices there are $\mathcal{O}(n!)$ possible paths\footnote{The number is less than $n!$ because  the start and final vertices are assigned.} in the  space of possible policies.

{
	Our first test aims at identifying a suitable value for the $P_R$ parameter used in Algorithm \ref{algo:rollout}, i.e., the
	probability of picking a random vertex for path expansion instead of doing a greedy choice. To this end, we
	performed a grid search for different values of $P_R$ ranging from 0.1 to 0.9 over the four test environments
	displayed in figure \ref{fig:graphs}. Results are shown in Figure \ref{fig:pr_trend} where we display the average
	normalized reward, i.e., the ratio between the maximum reward obtained in each graph instance and the reward 
	obtained for a given choice of $P_R$. This normalization is necessary because as the number of vertices varies, 
	the rewards  vary, too, and therefore comparing the absolute values can be misleading. 
	This preliminary assessment shows a mild sensitivity on $P_R$ (note  the scale of the $y$ axis) and consequently 
	in the following we fixed $P_R = 0.3$ in all our subsequent experiments.

\begin{figure}[h!]
	\centering
	\includegraphics[width=\linewidth]{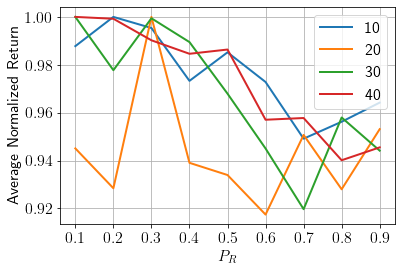}
	\caption{ Average normalized return for different values of $P_R$ on test graphs with the number of vertices
		ranging from 10 to 40. Averages are computed over 100 independent runs for each   $P_R$ value.}
	\label{fig:pr_trend}
\end{figure}
}

Figure \ref{fig:time_trend} shows how the computational time grows with the number of iterations $K$
(red line) as well as the standard deviation. The figure was collected for a problem instance with
20 vertices and  averaged over 50 independent executions. The trend is roughly linear,
as expected, and therefore one can accordingly tune the number of iterations based on the available time.
Variations in the computation time emerge because the produced paths may have more or less vertices
depending on the realizations of the stochastic travel times. It is worth observing that this time is not
all spent upfront but rather distributed along the path (see algorithm \ref{algo:general}).

\begin{figure}[htb]
	\centering
	\includegraphics[width=\linewidth]{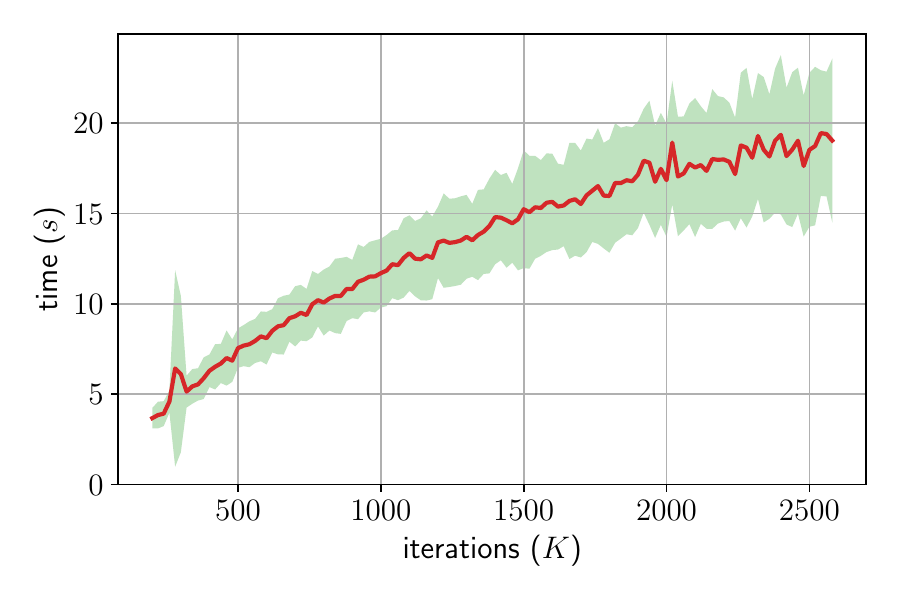}
	\caption{Computation time as a function of the number of iterations $K$ (red line). The green area
	shows the standard deviation (data averaged over 50 trials for each value of $K$). The chart
refers to a graph with 20 vertices and $S=100$ samples.}
	\label{fig:time_trend}
\end{figure}

Next, we investigate how the number of iterations $K$ in algorithm \ref{algo:MCTS} influences
the collected reward.  Figure \ref{fig:performance_trend} shows how the amount of collected 
rewards changes with $K$ (with all other parameters fixed.) for a graph with 20 vertices (red line)
 30 vertices (blue line) and 40 vertices (orange line). Data is averaged over 50 runs
 with $K$ varying from 200 to 2580.
In all instances the reward barely grows with the number of iterations, showing that already with
a value of $K$ around 1000 the algorithm displays a good performance.
For larger graphs with a larger branching factor, with larger values of $K$ one can expect a continued increase in the
accrued reward, but this comes at the cost of increased computational time, as shown in figure \ref{fig:time_trend}.
Overall, this figure seems to indicate that the reward collected by the  algorithm is not too sensitive to the value of $K$ (this would
of course not be the case when $K$ is decreased to smaller values now shown in the figure, as it would not have
the ability to sufficiently explore the set of paths.)

\begin{figure}[htb]
	\centering
	\includegraphics[width=\linewidth]{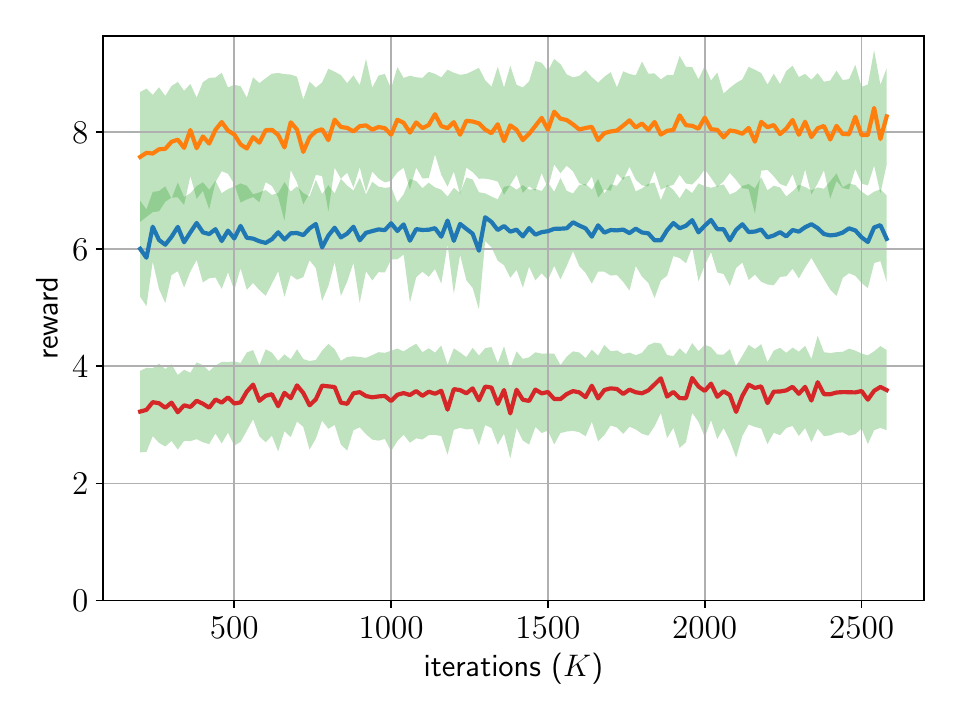}
	\caption{Reward as a function of the number of iterations $K$. The orange line 
	shows the trend for   40 vertices, the blue shows the reward trend for 30 vertices and the 
	red shows the trend for 20 vertices.
	The green area 	shows the standard deviation (data  averaged over 50 trials for each value of $K$).
	In all instances $S=100$ samples.}
	\label{fig:performance_trend}
\end{figure}

Finally, figure \ref{fig:samples_trend} shows how the probability of exceeding the budget
$B$ varies with the number of samples $S$ used to estimate the time to traverse a path.
In this specific case, the assigned failure probability was $P_f = 0.1$. As the number of samples
increases, the probability of failure decreases, as expected. This will further decrease
as the parameter $K$ increases because a larger part of the search space is searched 
(the chart was produced with $K=1000$, as per the considerations discussed above.)

\begin{figure}[htb]
	\centering
	\includegraphics[width=\linewidth]{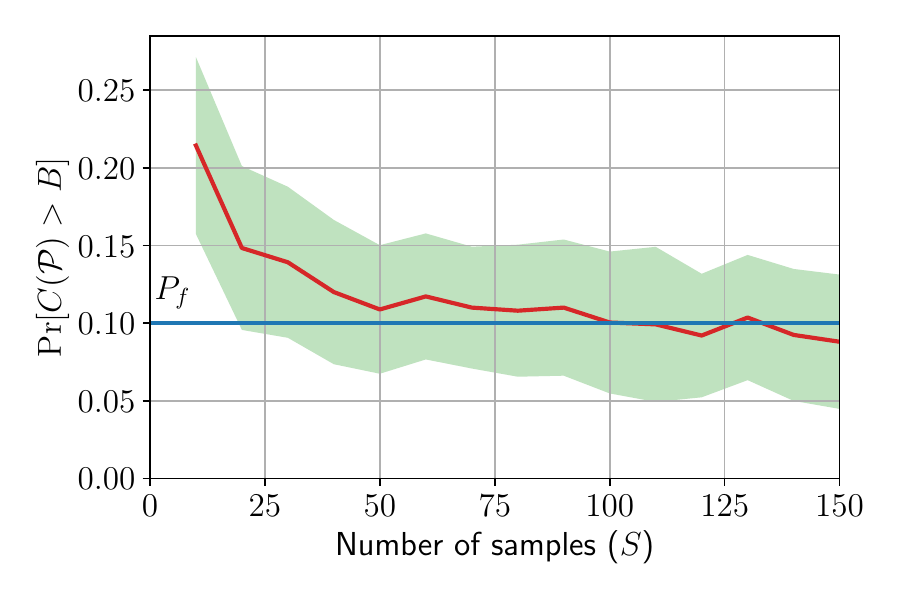}
	\caption{Probability of failure as a function of the samples size $S$ for a graph with 20 vertices, $K=1000$
		and $P_f=0.1$. The green area 	shows the standard deviation (data  averaged over 50 trials for each value of $S$).}
	\label{fig:samples_trend}
\end{figure}

Having assessed the sensitivity of the algorithm to its parameters, we next contrast its performance to alternative
solutions. In \cite{CarpinCASE2022} we already showed that the approach presented in this paper outperforms CMDP formulations we studied in 
 \cite{CarpinRAL2021b,CarpinIROS2021b,CarpinIROS2021a}. Therefore, in this venue we instead compare it with 
  the method proposed in \cite{10.1007/978-3-642-41575-3_30} that uses a MILP formulation to solve the problem.
This solution is  offline, i.e., it computes the path before its execution.  
{
The  formulation is given in the subsequent box. In particular, 
compare   Eq.~\eqref{eq:chanceMILP} with Eq.~\eqref{eq:sopcc} and the problem definition provided in section \ref{sec:background}.}

\begin{MILP}
	
\begin{align}
	\max_{\pi} & \sum_{i,j}\pi_{ij}r(v_i) \nonumber \\
	\textrm{s.t.} \nonumber \\
	\pi_{ij} &\in \{0,1\}  \quad \forall v_i,v_j  \in V  \nonumber \\
	\sum_{j} \pi_{ji} \leq 1 & \quad 	\sum_{j} \pi_{ij} \leq 1  \quad \forall v_i \in V   \nonumber \\
	\sum_{j} \pi_{1j} = 1 & \quad 	\sum_{j} \pi_{jn} = 1 \nonumber  \\
	\sum_{j} \pi_{ij} &-  	\sum_{j} \pi_{ji} = 
	\begin{cases}
		1 & \text{if} ~i = 1  \nonumber\\
		-1 & \text{if} ~i = n  \nonumber\\
		0 & \text{otherwise} \nonumber
	\end{cases} \nonumber \\
	s_i \leq s_j & -1 + (1-\pi_{ij})M \quad \forall v_i,v_j  \in V  \nonumber \\
	s_1 = 1 \quad&  s_n = n \quad s_i \in [1,n] \quad \forall v_i \in V \nonumber  \\
	\Pr & \left( \sum_{i,j} \pi_{ij}\xi_{ij} \geq B\right) \leq P_f \label{eq:chanceMILP}
\end{align}

\end{MILP}

The formulation introduces one binary variable $\pi_{ij}$ for each edge $(v_i,v_j)$ and aims at maximizing the sum
of collected rewards. The reader is referred to  \cite{10.1007/978-3-642-41575-3_30}  for  a full discussion 
of the other variables and constraints. For the sake of comparison, here we focus on Eq.~\eqref{eq:chanceMILP} which involves the 
chance constraint because this influences the comparisons we will present next.  
As pointed out in  \cite{10.1007/978-3-642-41575-3_30}, this constraint is in general
non linear, and therefore the SAA method discussed in Section \ref{sec:background} is used to 
turn this nonlinear constraint into a linear deterministic approximation. 
This approximation turns the optimal formulation given above into a suboptimal one
that can be solved by a MILP solver. 
The approximation 
is as follows.
First, draw $Q$ samples for each random variable $\xi_{ij}$.
These samples are indicated as $t_{ij}^q$. Integer variables $z^q$ for each sample are introduced, 
together with these additional linear constraints:
\[
z^q \geq \frac{\sum_{ij}\pi_{ij}t_{ij}^q-B}{B} \qquad z^q \in \{0,1\}
\]
Starting from these definitions, the chance constraint  in Eq.~\eqref{eq:chanceMILP} is 
substituted with the following 
\[
\frac{\sum_q z^q}{Q} \leq \beta
\]
where, as noted in \cite{10.1007/978-3-642-41575-3_30}, the value $\beta$ is set by the user and is smaller
than the $P_f$ used in  Eq.~\eqref{eq:chanceMILP}. This approach therefore leaves open the questions of
1) how to choose a $Q$ value; 2) how to  pick $\beta$.

Our implementation of the MILP solution uses the Gurobi software. After preliminary experiments
aiming at finding suitable values for $Q$ and $\beta$,  in all cases we set $Q = 80$ and $\beta= P_f/2$.
Moreover, in all tests we cap the computation time at 600 seconds. If the MILP solver has not found a solution
by the deadline, the best solution produced thus far is returned.
For the MCTS implementation, instead, we set $K=350$ (number of tree expansions) and
$S=100$ (number of rollouts per leaf node.) These values were selected based on the results we presented
at the beginning of this section.

The comparisons we present span three dimensions. The first is the collected reward, the second is the failure 
probability, and the third is the computational time. Figure \ref{fig:reward_milp_mcts} compares the reward
collected by the two algorithms. More specifically, it is the ratio between the reward collected by the MCTS
algorithm and the MILP algorithm. MCTS results are averaged over  100 independent runs, while MILP results are averaged over 10 runs (we
run fewer test cases for MILP because, as it will be shown  later, it takes much longer than the MCTS.)

\begin{figure}[htb]
	\centering
	\includegraphics[width=0.9\columnwidth]{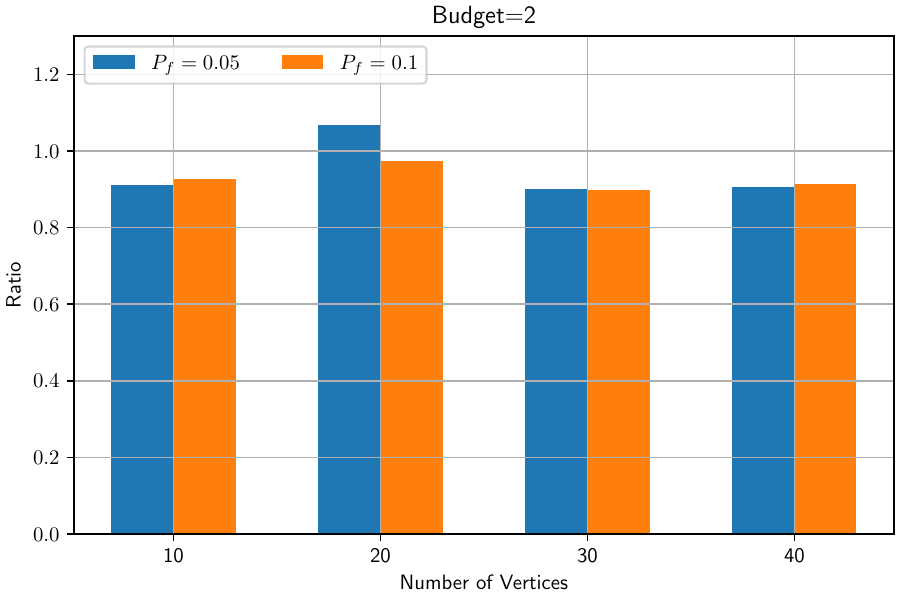}
		\includegraphics[width=0.9\columnwidth]{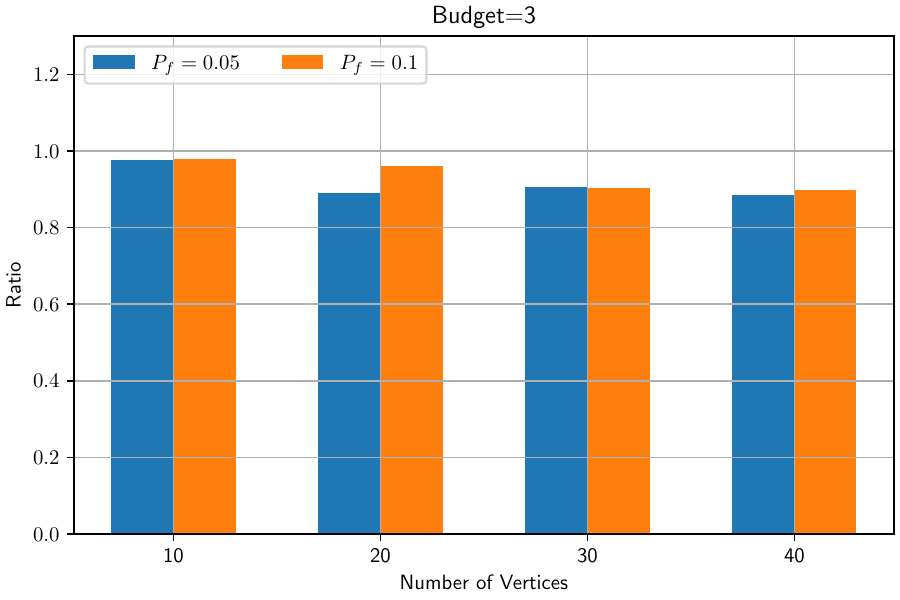}
		\caption{Ratio between the reward collected by the MCTS and the MILP solutions for different values of $P_f$.
		The top panel is for $B=2$ and the bottom one is for $B=3$. }
			\label{fig:reward_milp_mcts}
\end{figure}

As the MILP algorithm approximates the optimal solution, the ratio is expected to always be below 1
and the closer to 1, the better for the MCTS algorithm. The figure compares two different budgets (top panel
$B=2$ and bottom panel $B=3$), as well as two different values for $P_f$. As  can be seen, the ratio is always
in the range of 0.9 and in some cases closer to 1. A special case to be observed is for $B=2$ and $P_f = 0.05$, where
the MCTS algorithm beats the MILP algorithm and the ratio is therefore larger than 1. While this is in general not possible, one has to keep in mind
that in all cases the MILP algorithm was stopped after 600 seconds. Hence in this specific instance, it can be inferred that the MILP
algorithm was stopped while the partial solution was still relatively far from the optimal one, so that MCTS managed to find a better one.

Next, we compare the effective failure probability obtained by the two algorithms. Results are shown in figures 
\ref{fig:failure_milp_mcts2} and \ref{fig:failure_milp_mcts3} for different budgets and failure probabilities.
In this test,  data are averaged over 50 trials for each algorithm. More precisely, for the MCTS algorithm we consider
50 independent runs, while for the MILP algorithm we consider 10 solutions and run each of these 5 times.

\begin{figure}[htb]
	\centering
	\includegraphics[width=0.9\columnwidth]{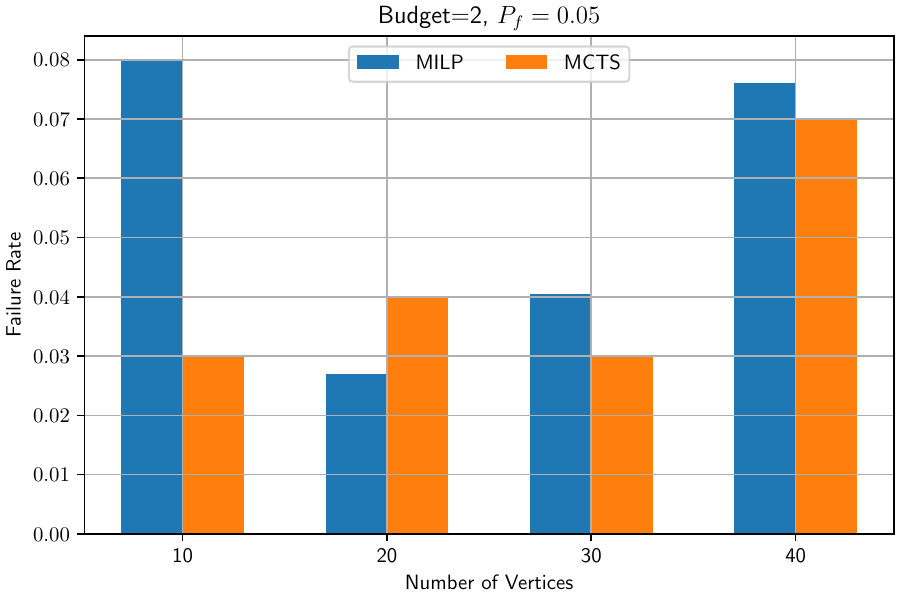}
	\includegraphics[width=0.9\columnwidth]{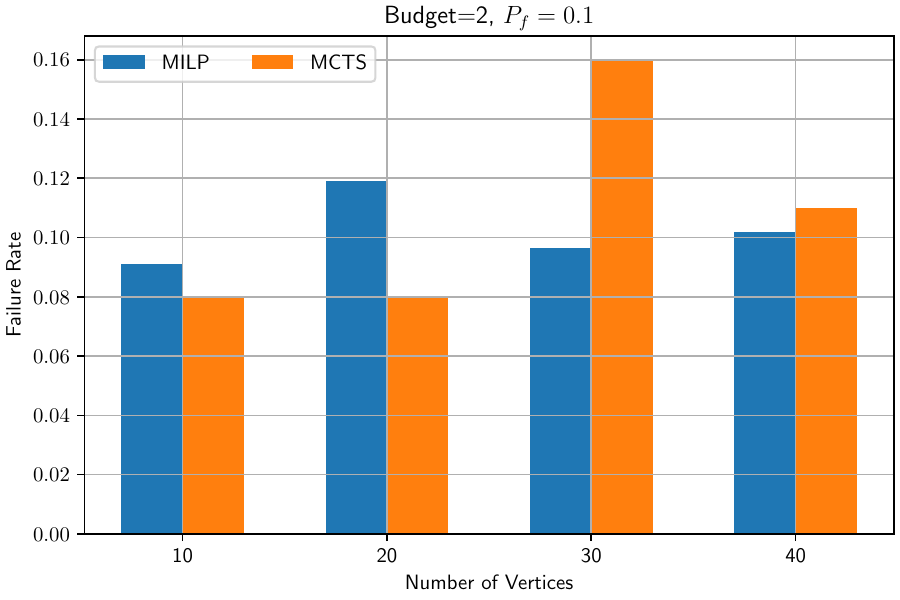}
	\caption{Failure rates incurred by the MILP algorithm (blue) and MCTS algoritm (orange) for 
	the case $B=2$. The top panel is for $P_f=0.05$ and the bottom one is for $P_f=0.1$. }
	\label{fig:failure_milp_mcts2}
	\end{figure}
	
	\begin{figure}[htb]
		\centering
		\includegraphics[width=0.9\columnwidth]{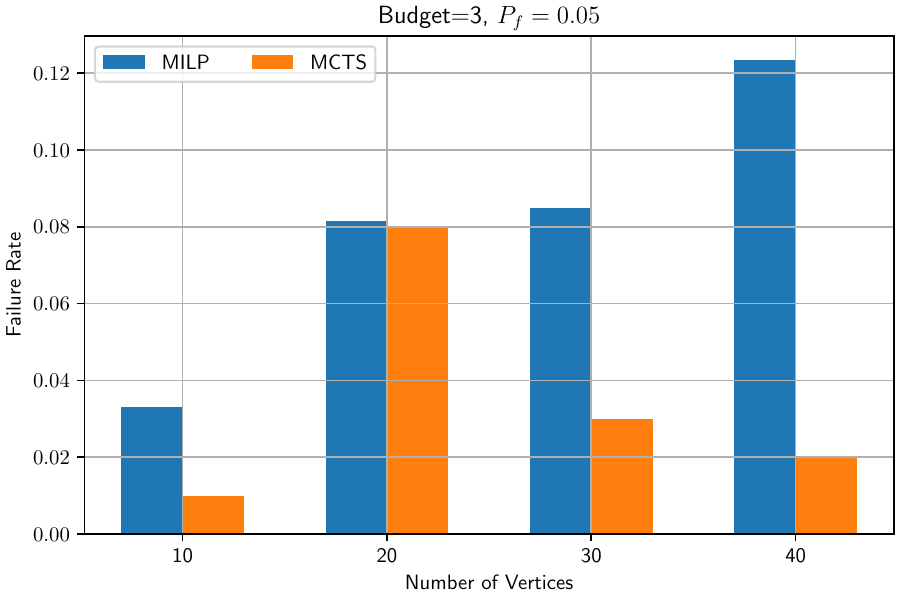}
			\includegraphics[width=0.9\columnwidth]{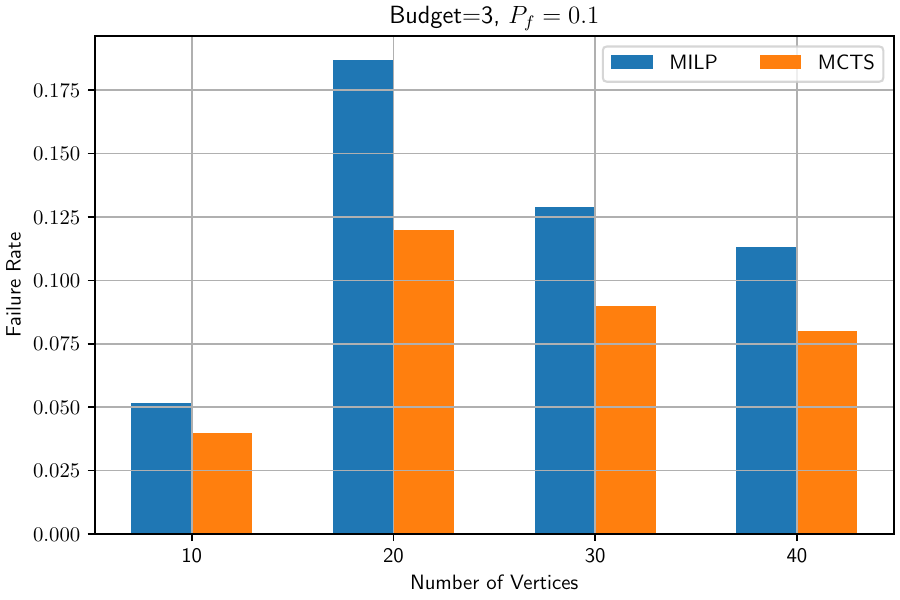}
	\caption{Failure rates incurred by the MILP algorithm (blue) and MCTS algoritm (orange) for 
		the case $B=3$. The top panel is for $P_f=0.05$ and the bottom one is for $P_f=0.1$. }
		\label{fig:failure_milp_mcts3}
\end{figure}

In this case, lower is better, and we observe that 
in 13 out of 16 cases MCTS achieves a lower failure probability than the MILP algorithm.
As pointed out by the authors, the MILP algorithm is sensitive to the values selected for $Q$ and $\beta$
and its failure probability could decrease by increasing $Q$ and decreasing $\beta$. However, both 
changes come at a cost. Increasing $Q$ increases the number of binary variables (and hence computation time)
and decreasing $\beta$ will generate routes collecting less reward. The MCTS algorithm, instead, does not need
a careful tuning of the parameters, as pointed out at the beginning of this section.

Finally, we compare the computational time again for the same values for $B$ and $P_f$. Results are shown
in figures \ref{fig:time_milp_mcts2} and \ref{fig:time_milp_mcts3}.

\begin{figure}[htb]
	\centering
	\includegraphics[width=0.9\columnwidth]{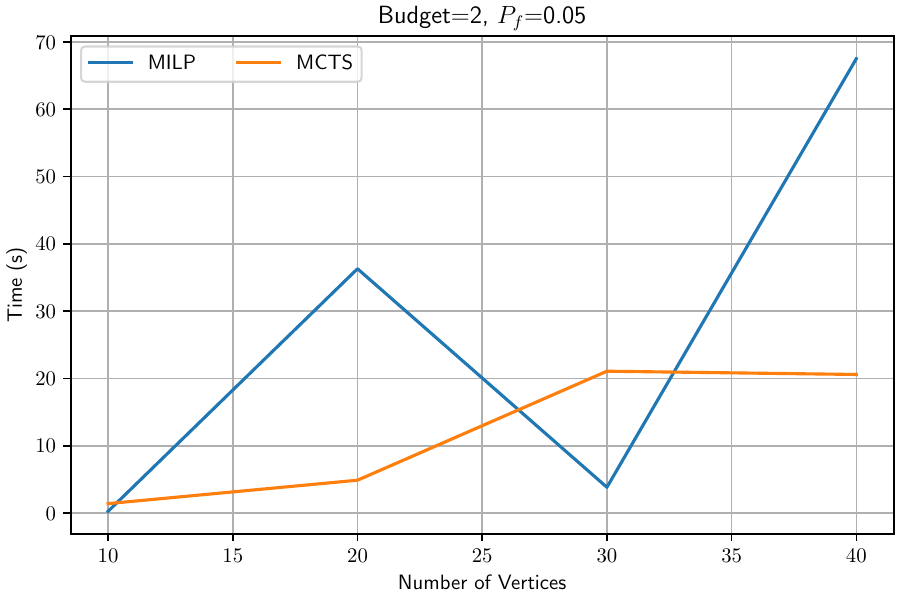}
	\includegraphics[width=0.9\columnwidth]{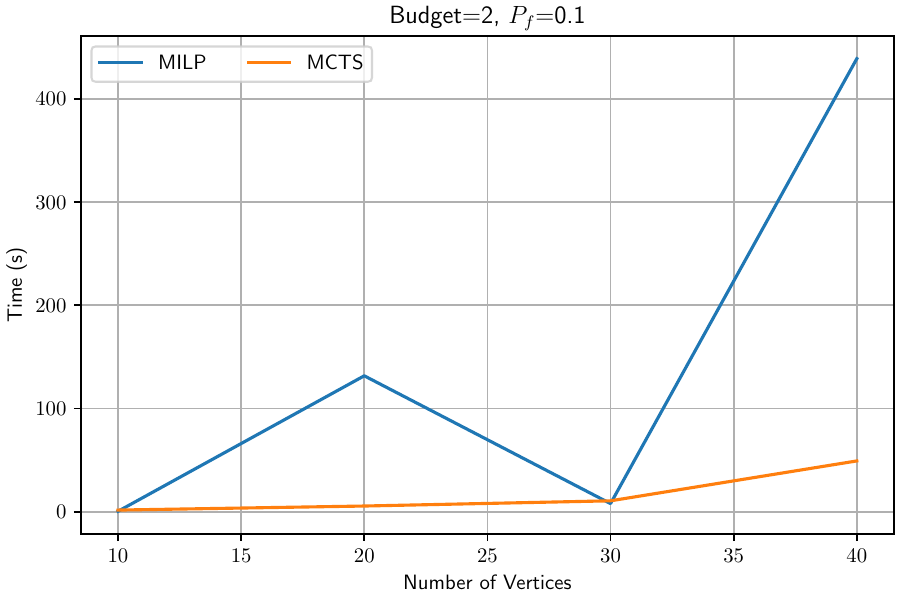}
	\caption{Average time spent to find the solution by the MILP algorithm (blue) and MCTS algoritm (orange) for 
		the case $B=2$. The top panel is for $P_f=0.05$ and the bottom one is for $P_f=0.1$.}
	\label{fig:time_milp_mcts2}
\end{figure}

\begin{figure}[htb]
	\centering
	\includegraphics[width=0.9\columnwidth]{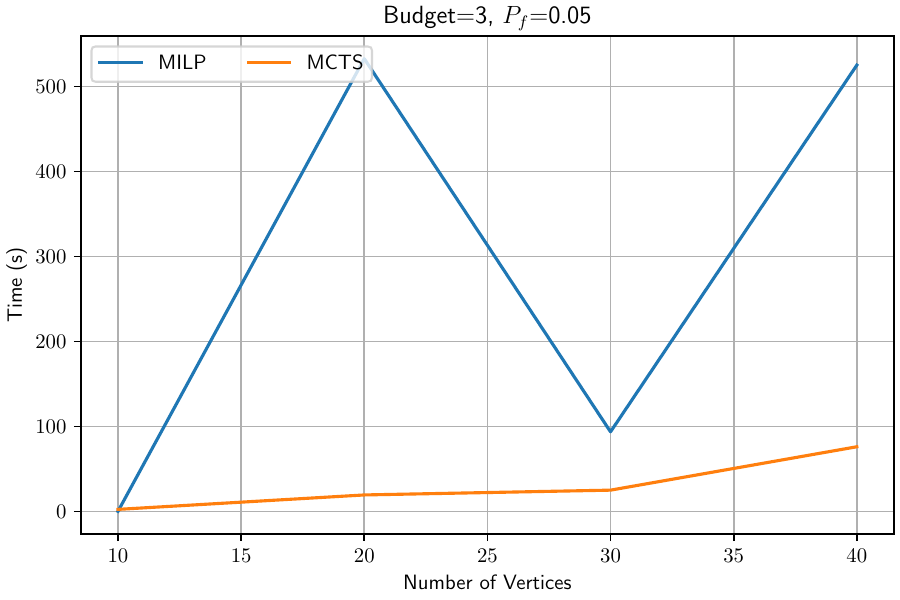}
	\includegraphics[width=0.9\columnwidth]{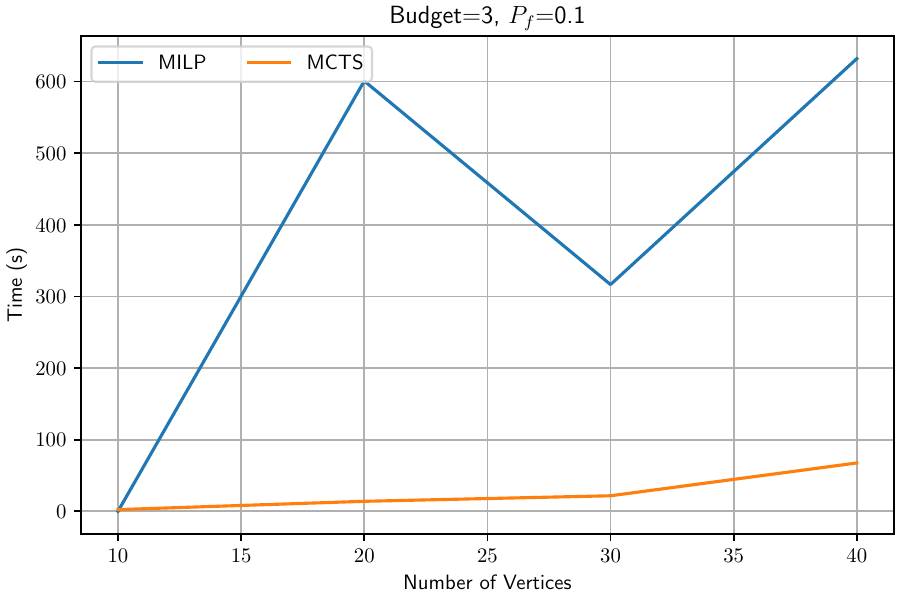}
	\caption{Average time spent to find the solution by the MILP algorithm (blue) and MCTS algoritm (orange) for 
		the case $B=3$. The top panel is for $P_f=0.05$ and the bottom one is for $P_f=0.1$.}
	\label{fig:time_milp_mcts3}
\end{figure}

In this case, we observe that for small problem instances (10 vertices) the time spent by the two 
algorithms is similar, with the MILP algorithm actually being even faster than the MCTS algorithm.
However, as the size of the problem increases, as expected, the MILP algorithm needs much more
time, with the exception of the case $B=2$, $n=30$, where evidently the structure of the graph
allows to quickly prune the search space. In all other cases, the gap is very large.

Overall these results suggest that for large problem sizes the proposed MCTS algorithm allows us to find high quality solutions
(albeit suboptimal) spending a fraction of the time and incurring  a lower failure rate. In a practical scenario,
one could actually implement both algorithms, and use MILP for small problem instances, and MCTS for large ones.

	Having assessed that the algorithm we propose favorably compares to the state of the art when
	solving instances we generated on our own, we conclude this experimental evaluation by further 
	comparing them on benchmark instances used by the traveling salesman problem (TSP) research 
	community and freely available online.\footnote{\url{ http://webhotel4.ruc.dk/~keld/research/LKH-3/}}
	The instances  we uses have a number of vertices ranging from 16 to 70 and we used the same
	noise distribution given in Eq.~\eqref{eq:noise} to  describe the edge cost noise.
	As TSP problem instances do not include vertex rewards, we assigned rewards using a uniform distribution
	over the range $[1,4]$. Figure \ref{fig:newgraphs} shows these additional problem instances, while Table \ref{tab:results}
	displays the results. Note that for the MILP approach in this case we had to increase the $Q$ value to 120,
	otherwise the failure probability exceeds the assigned $P_f$ threshold.
	In this case we observe that, as expected, as the number of vertices grows, the performance gap
	between the approximate MCTS approach and the MILP solution grows, too, but never drops below 0.72.
	At the same time, in all cases MCTS is always at least 50 times faster than the MILP solution.
	It is also worthwhile mentioning that for both algorithms one could perform a grid search to determine
	the optimal parameter values, but this has not been done. Overall, these additional benchmark
	problems show that the results we obtained with our own problem instances generalize also to
	other test cases.

	\begin{figure*}[htb]
		\centering
		\includegraphics[width = 0.3 \textwidth]{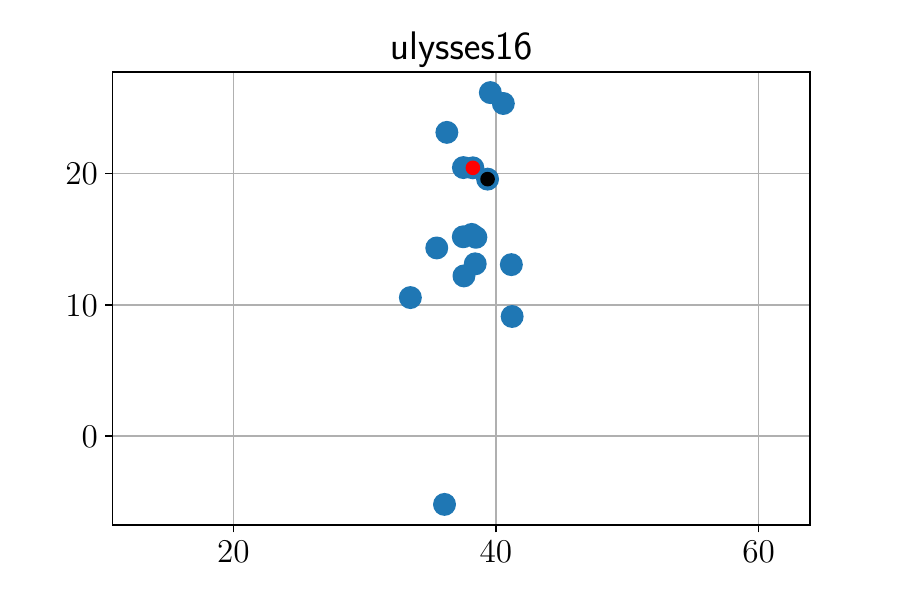}
		\includegraphics[width = 0.3 \textwidth]{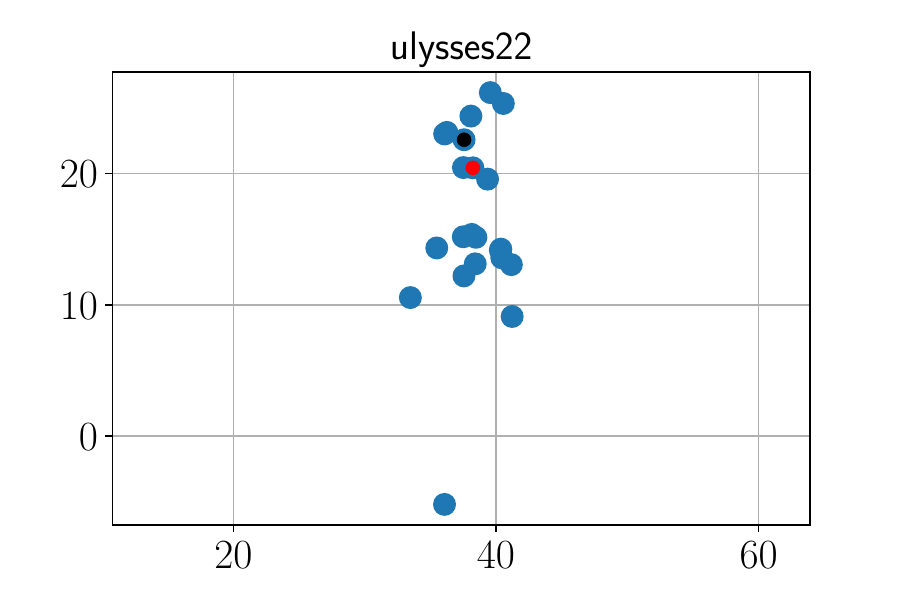}
		\includegraphics[width = 0.3 \textwidth]{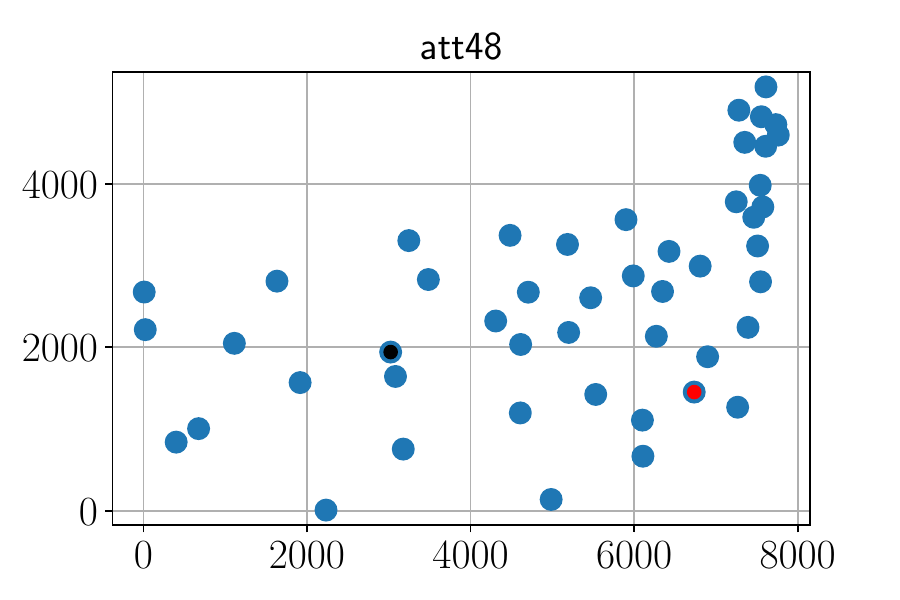}
		\includegraphics[width = 0.3 \textwidth]{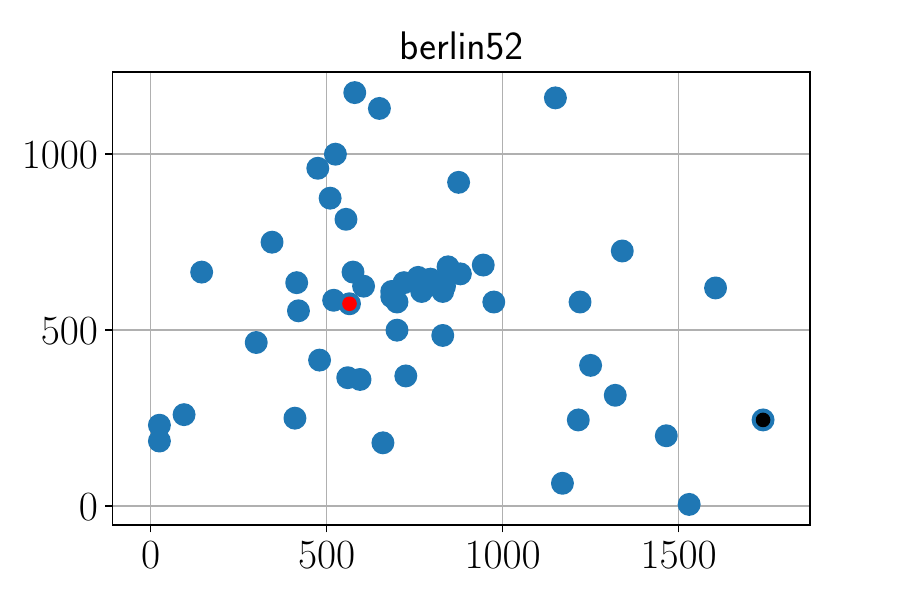}
		\includegraphics[width = 0.3 \textwidth]{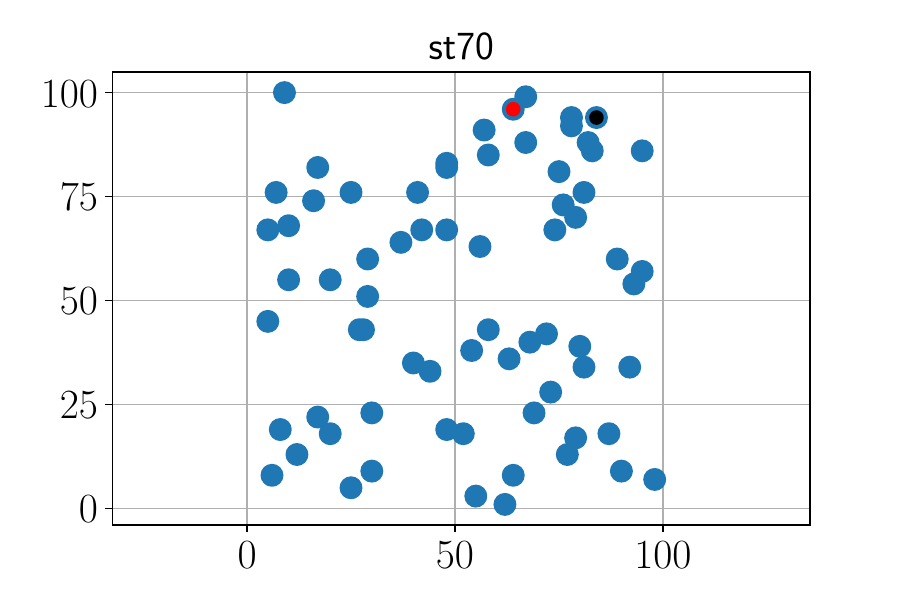}
		\caption{The five  additional benchmark problems borrowed from the TSP research community.
		The number of vertices varies from 16 to 70 (the numeric suffix in the benchmark name gives the number of vertices).}
		\label{fig:newgraphs}
	\end{figure*}

\begin{table*}[htb]
	\centering
		
\begin{tabular}{|c|c|c|c|c|c||c|c|c||c|c|}
	\cline{4-11}
	\multicolumn{3}{c|}{} & \multicolumn{3}{c||}{MCTS} & \multicolumn{3}{c||}{MILP} & \multicolumn{2}{c|}{Ratios}\\
	\hline
Test case	& Budget &  $P_f$ & Reward &$P_f$  & Time(s)  & Reward  & $P_f$ & Time(s)  & Reward &  Time   \\
	\hline
ulysses16 & 50 & 0.05 & 38.55 & 0.05 & 2.44 & 39.41 & 0.064 & 374.21 & 0.98  & 153.20\\
ulysses16 & 50 & 0.1 & 39.32 & 0.08 & 2.46 & 40.68 & 0.095 & 600.29 & 0.97  & 243.58\\
ulysses22 & 50 & 0.05 & 54.67 & 0.02 & 3.17 & 59.36 & 0.0465 & 600.71 & 0.92  & 189.09\\
ulysses22 & 50 & 0.1 & 54.63 & 0.05 & 3.08 & 59.86 & 0.078 & 600.59 & 0.91  & 194.74\\
%burma14 & 10 & 0.05 & 12.6635 & 0.04 & 0.2907 & 9.4406 & 0.0385 & 0.2035 & 1.341  & 0.700\\
%burma14 & 10 & 0.1 & 15.6513 & 0.12 & 0.3557 & 11.9964 & 0.0655 & 0.4629 & 1.304  & 1.301\\
att48 & 25000 & 0.05 & 104.14 & 0.08 & 7.67 & 134.36 & 0.0590 & 600.75 & 0.77  & 78.29\\
att48 & 25000 & 0.1 & 104.79 & 0.1 & 8.21 & 136.09 & 0.0815 & 600.74 & 0.77  & 73.17\\
berlin52 & 5000 & 0.05 & 109.28 & 0.06 & 8.01 & 141.74 & 0.0735 & 600.84 & 0.77  & 74.99\\
berlin52 & 5000 & 0.1 & 110.99 & 0.08 & 8.21 & 142.52 & 0.0875 & 669.16 & 0.78  & 81.42\\
st70 & 500 & 0.05 & 125.83 & 0.02 & 11.54 & 173.50 & 0.0745 & 629.03 & 0.72  & 54.50\\
st70 & 500 & 0.1 & 126.65 & 0.02 & 11.96 & 174.77 & 0.0785 & 622.38 & 0.72  & 52.00\\
	\hline
\end{tabular}
\caption{Comparison between the proposed MCTS approach and the MILP solution. For both algorithms we show
the collected reward, the experimental failure probability $P_f$ and the time spent to find the solution. The tenth column
shows the ratio between the reward obtained by MCTS and the reward obtained by MILP. In this case values closer to one are 
desirable. The eleventh column shows the ratio between the time spent by MILP and the time spent by MCTS. In this case, larger
values indicate more significant speedups. For the MCTS approach we consider averages over 100 runs, while for the MILP approach
we consider averages over 10 runs.}
\label{tab:results}
\end{table*}

\section{Conclusions and Future Work} 
\label{sec:conclusions}

In this paper, we have presented a new algorithm to solve the stochastic orienteering problem with chance
constraints. The main novelty is in using an MCTS approach to explore the space of possible policies and
a generative model for the travel times across the edges to prune away realizations that violate the 
assigned bound on the probability of failure. To the best of our knowledge, this approach is novel.
The proposed algorithm offers various advantages. By being an anytime algorithm, it can produce results
within a given pre-assigned computational time, while  previous methods will not produce any results
until the associated linear program is built and solved. In addition, we have shown that in a variety 
of test cases our method produces high quality solutions in a fraction of the time.

There are various venues for further research. First, the current rollout policy relies on a simple
greedy strategy. It would be interesting to explore whether different rollout policies (e.g., policies using
some of the heuristic methods formerly proposed for the orienteering problems) would produce better results (either
in terms of speed or collected rewards).

	From an application standpoint, it would also be interesting to consider the case of stochastic rewards,
	for both  cases when edge costs are deterministic or stochastic. With stochastic rewards
	and deterministic edge costs, one could consider a chance constraint on the minimum reward to collect.
	In this case, the proposed method would be mostly applicable 	``as is'' after having replaced the
	rollout method with one sampling collected reward values, rather than travel costs. The case with 
	both stochastic rewards and stochastic edge costs, appears to be more complex, as one could define
	separate chance constraints for both. In this case, one could extend the proposed method with rollouts sampling both
	stochastic quantities, but it would also be necessary to revise the problem formulation accounting for multiple constraints.
	These extensions will be considered in future iterations of this research. 

Finally, it will be interesting to investigate whether this approach
could be extended to study other planning problems with chance constraints where building a finite
state space is disadvantageous because of the need to discretize continuous dimensions, such as time.

\section*{Acknowledgments}
This paper extends and complements preliminary results presented in \cite{CarpinCASE2022}.
The author thanks Thomas C. Thayer for his contributions to the early stages of this research
and Roberta Marazzato for creating Figure 1.

	\bibliographystyle{IEEEtran}
\bibliography{../../../bibtex/orienteering.bib,../../../bibtex/carpin.bib,../../../bibtex/books.bib,../../../bibtex/planning.bib}

\end{document}